\newtheorem{theorem}{Theorem}
\newtheorem{proposition}{Proposition}
\newtheorem{defn}{Definition}
\tikzset{
    cheating dash/.code args={on #1 off #2 ends #3}{%
        \csname tikz@addoption\endcsname{
            \pgfgetpath\currentpath%
            \pgfprocessround{\currentpath}{\currentpath}%
            \csname pgf@decorate@parsesoftpath\endcsname{\currentpath}{\currentpath}%
            \pgfmathparse{max(#1-#3,0)}\let\dashphase=\pgfmathresult%
            \pgfmathparse{\csname pgf@decorate@totalpathlength\endcsname-#1+2*\dashphase}\let\rest=\pgfmathresult%
            \pgfmathparse{#1+#2}\let\onoff=\pgfmathresult%
            \pgfmathparse{max(floor(\rest/\onoff), 1)}\let\nfullonoff=\pgfmathresult%
            \pgfmathparse{max((\rest-\onoff*\nfullonoff)/\nfullonoff+#2, #2)}\let\offexpand=\pgfmathresult%
            \pgfsetdash{{#1}{\offexpand}}{\dashphase pt}}%
    },
	cheating dash per segment/.style args={on #1 off #2 ends #3}{
	    /utils/exec=\csname tikz@options\endcsname,
		decoration={show path construction,
			lineto code={\draw [cheating dash=on #1 off #2 ends #3] (\tikzinputsegmentfirst) -- (\tikzinputsegmentlast);},
			curveto code={\draw [cheating dash=on #1 off #2 ends #3] (\tikzinputsegmentfirst) .. controls (\tikzinputsegmentsupporta) and (\tikzinputsegmentsupportb) .. (\tikzinputsegmentlast);},
			closepath code={\draw [cheating dash=on #1 off #2 ends #3] (\tikzinputsegmentfirst) -- (\tikzinputsegmentlast);}
		},
		decorate,
	},
}
\definecolor{myblue}{HTML}{0072B2}
\definecolor{mygreen}{HTML}{275D38}
\title{The World Is Bigger!\\ A Computationally-Embedded Perspective\\ on the Big World Hypothesis}
\author{%
  Alex Lewandowski$^{1,2,\star}$\quad
  Aditya A. Ramesh$^{3}$ \quad
  Edan Meyer$^{1,2}$\\
  \textbf{Dale Schuurmans}$^{1,2,4,5}$ \quad
  \textbf{Marlos C. Machado}$^{1,2,4}$ \\
  $^1$University of Alberta \quad
  $^2$Amii \quad
  $^3$The Swiss AI Lab IDSIA, USI \& SUPSI\\
  $^4$Canada CIFAR AI Chair \quad
  $^5$Google DeepMind
}
\begin{document}

\maketitle

\begin{abstract}
    Continual learning is often motivated by the idea, known as the big world hypothesis, that ``the world is bigger'' than the agent.
  Recent problem formulations capture this idea by explicitly constraining an agent relative to the environment.
  These constraints lead to solutions in which the agent continually adapts to best use its limited capacity, rather than converging to a fixed solution.
  However, explicit constraints can be ad hoc, difficult to incorporate, and may limit the effectiveness of scaling up the agent's capacity.
  In this paper, we characterize a problem setting in which an agent, regardless of its capacity, is constrained by being embedded in the environment.
  In particular, we introduce a \emph{computationally-embedded} perspective that represents an embedded agent as an automaton simulated within a universal (formal) computer.
  Such an automaton is always constrained; we prove that it is equivalent to an agent that interacts with a partially observable Markov decision process over a countably infinite state-space.
  We propose an objective for this setting, which we call \emph{interactivity}, that measures an agent's ability to continually adapt its behaviour by learning new predictions.
  We then develop a model-based reinforcement learning algorithm for interactivity-seeking, and use it to construct a synthetic problem to evaluate continual learning capability. 
  Our results show that deep nonlinear networks struggle to sustain interactivity, whereas deep linear networks sustain higher interactivity as capacity increases. 


\end{abstract}

{
  \let\thefootnote\relax
  \footnotetext{$^\star$Correspondence to: Alex Lewandowski <lewandowski@ualberta.ca>.}
}

\section{Introduction}
\label{sec:intro}

The goal of this paper is to characterize a general problem setting in which the best use of an agent's limited capacity is to continually adapt~\citep{ring94_contin,thrun98_lifel_learn_algor,abel23}.
Our approach is motivated by the idea, known as the big world hypothesis, that ``the world is bigger'' than the agent~\citep{javed24_big_world_hypot_ramif_artif_intel}.
That is, an agent in a big world may lack the capacity to learn the fixed optimal solution, and should instead continually adapt to new experience by updating its approximate solution~\citep[\emph{i.e.}, by tracking, ][]{sutton07}.
However, formalizing the relationship between the agent and the environment presents a challenge, because they are typically treated as separate entities in reinforcement learning (see Figures~\ref{fig:traditional}~and~\ref{fig:aixi}).
We address this by introducing a computationally-embedded agent that is simulated by the environment's dynamics, thus constraining the agent within the environment.
With this perspective, we construct a problem setting in which any such agent is (i) constrained by its capacity, and (ii) suboptimal if it stops learning.
\looseness=-1

Explicit constraints on the agent have been previously considered in continual learning as a means of capturing the big world hypothesis.
For example, in continual learning experiments, the agent's learning algorithm is often explicitly constrained by what it can store~\citep{prabhu20_gdumb}, or by the expressivity of its function approximator~\citep{meyer24_harnes_discr_repres_contin_reinf_learn}.
Other real-world constraints on the agent's hardware have also been considered.
Such constraints include limits on the agent's compute \citep[see discussion on measuring compute in Section 4.1,][]{verwimp24_contin_learn} and on the agent's energy use \citep{javed24_big_world_hypot_ramif_artif_intel}.
One recent formalization of continual learning uses an explicit constraint on the agent's information-theoretic capacity \citep{kumar23_contin_learn_comput_const_reinf_learn,kumar24_need_big_world_simul}.
However, beyond analytically tractable agents and environments, this information-theoretic constraint is difficult to measure and enforce, limiting its generality as a problem setting for continual learning.
Furthermore, explicitly constraining the agent limits the effectiveness of scaling up the agent's capacity, which has been a major source of progress in machine learning more broadly
\citep{hestness2017deep, kaplan2020scaling, hoffmann22_train_comput_optim_large_languag_model}.
These limitations suggest that explicit constraints may not be an effective way of capturing the big world hypothesis.
\looseness=-1

In contrast to explicit constraints, our approach considers the implicit constraint that arises from an agent embedded in an environment (see Figure \ref{fig:embedded}).
Specifically, an embedded agent is, in principle, fully defined within the environment's state, and simulated by the environment's dynamics.
The embedded nature of agents---that they exist within and as part of their environment---is typically treated as outside the scope of the problem formulation \citep{demski19_embed_agenc}.
However, the physical world is a clear example of a world bigger than any agent, suggesting that embedded agency can provide a natural formalization of the big world hypothesis and of continual learning.

To formalize an agent embedded in an environment, we define a \emph{universal-local environment}: a Markov process whose transition dynamics can simulate an agent within a finite portion of its state-space. 
\emph{Computational universality} guarantees that the environment can simulate any algorithm \citep{church36,turing37_entsc}.
\emph{Uniform locality} decomposes the environment's transition dynamics into a collection of identical Markov processes, each operating on a finite portion of the state-space. 
An embedded agent is then represented as an automaton simulated within the environment, formalized as one of these local Markov processes.
When the automaton's dynamics are determined solely by its input and output, we prove it interacts with a partially observable Markov decision process: it receives inputs (observations), updates its internal state, and produces outputs (actions).
We then propose \emph{interactivity}, defined through algorithmic complexity~\citep{Kolmogorov:65, Solomonoff:64, Chaitin:66}, to measure an automaton's ability to continually adapt its future input-output behaviour.
An interactivity-seeking agent pursues behaviour that is increasingly complex while remaining predictable given its past experience.
Crucially, we prove that an agent's interactivity is constrained by its capacity, which can be allocated either to increase its behavioural complexity or to improve its behavioural predictability.

\begin{figure}[t]
  \centering
  \begin{subfigure}{0.32\textwidth}
    \vfill
    \centering
    \raisebox{1.5mm}{
      \begin{tikzpicture}
        \begin{scope}[scale=0.6,yshift=1cm]
        \begin{scope}[xshift=0cm]
          \node[above,align=center] at (2.0,2.8) {Bounded\\Env};
          \draw[line width=2pt] (1.0,0.5) rectangle +(2,2);
          \draw[solid, Triangle-Triangle, line width=1pt] (1.1, 0.6) -- (2.9, 2.4);
        \end{scope}
        \begin{scope}[xshift=4cm]
          \node[above] at (1.5,3.2) {\raisebox{0pt}[0pt][0pt]{Scalable Agent}};
          \draw[line width=2pt] (0,0) rectangle (3,3);
          \draw[dashed, Triangle-Triangle, line width=1pt] (0.1, 0.1) -- (2.9, 2.9);
        \end{scope}

        \draw[->,>=latex, line width=2pt] (3,1.75) -- (4,1.75);
        \draw[->,>=latex, line width=2pt] (4,1.25) -- (3,1.25);
        \end{scope}
      \end{tikzpicture}
    }
    \caption{Traditional RL}
    \label{fig:traditional}
    \vfill
  \end{subfigure}
  \begin{subfigure}{0.32\linewidth}
    \vfill
    \centering
\begin{tikzpicture}[mylines/.style={ultra thick,line cap=rect,line join=square}]
        \begin{scope}[scale=0.6]
        \begin{scope}[xshift=0cm]
          \node[above,align=center] at (1.75,3.5) {\raisebox{0pt}[0pt][0pt]{Universal Env}};
          \draw[mylines, line width=2pt, cheating dash per segment=on 5pt off 3.5pt ends 7.5pt] (0,0) rectangle (3.5,3.5);
        \end{scope}
        \begin{scope}[xshift=4.5cm]
          \node[above,align=center] at (1.75,3.5) {\raisebox{0pt}[0pt][0pt]{AIXI Agent}};
          \draw[mylines, line width=2pt, cheating dash per segment=on 5pt off 3.5pt ends 7.5pt] (0,0) rectangle (3.5,3.5);
        \end{scope}
        \draw[->,>=latex, line width=2pt] (3.5,2.02) -- (4.5,2.02);
        \draw[->,>=latex, line width=2pt] (4.5,1.48) -- (3.5,1.48);
        \end{scope}
      \end{tikzpicture}
    \caption{Universal AI}
    \label{fig:aixi}
    \vfill
  \end{subfigure}
    \hspace{2mm}
  \begin{subfigure}{0.32\linewidth}
    \vfill
    \centering
\begin{tikzpicture}[mylines/.style={ultra thick,line cap=rect,line join=square}]
        \begin{scope}[scale=0.6]
        \begin{scope}[xshift=0cm]
          \node[above] at (2.5,3.5) {Universal-Local Env};
          \draw[mylines, line width=2pt, cheating dash per segment=on 5pt off 3.5pt ends 7.5pt] (0,0) rectangle +(5,3.5);

        \end{scope}
        \begin{scope}[xshift=-0.2cm]
          \node[above, align=center, scale=0.7] at (3.5,2.75) {Embedded Agent};
          \draw[line width=2pt] (2.5,0.75) rectangle +(2,2);
          \draw[dashed, Triangle-Triangle, line width=1pt] (2.6, 0.85) -- (4.4, 2.65);
        \draw[->,>=latex, line width=2pt] (1.5,2) -- (2.5,2);
        \draw[->,>=latex, line width=2pt] (2.5,1.5) -- (1.5,1.5);
        \end{scope}
        \end{scope}
      \end{tikzpicture}
    \caption{This work}
    \label{fig:embedded}
    \vfill
  \end{subfigure}

\caption{\textbf{Comparing the agent and environment in different problem formulations.} Each problem formulation differs in the constraints that it imposes on the agent relative to the environment.
\textbf{Traditional RL:} A given environment typically has a bounded capacity, but a scalable agent can increase its capacity. Such an agent is unconstrained in principle because its capacity can always be scaled beyond the environment.
\textbf{Universal AI:} Both the computationally universal environment and the AIXI agent are unbounded. AIXI is thus unconstrained but not computable.
\textbf{This work:} The universal-local environment is unbounded, but it can simulate an embedded agent of any bounded capacity within its state-space. An embedded agent is implicitly constrained because the environment necessarily has greater capacity than any agent contained within it.
}
    \vspace{-3mm}
\end{figure}
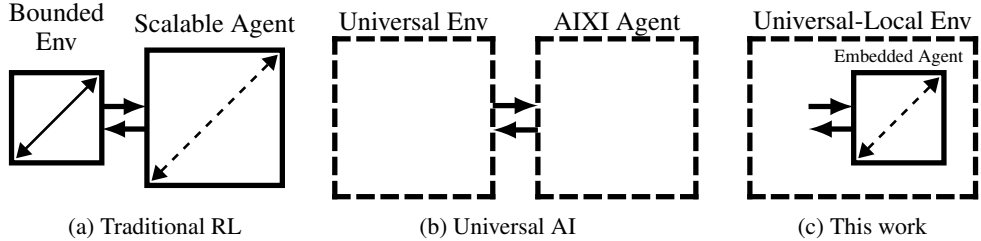

Universal artificial intelligence similarly considers universal environments \citep{hutter2000theory,Hutter:05uaibook}, in which the uncomputable AIXI agent was extended to an embedded formulation~\citep{orseau12_space_time}.
However, these works did not consider the problem of learning under limited capacity.
By considering a universal environment that is also local, we show that an interactivity-seeking agent must continually adapt to its experience, regardless of its capacity.
Interactivity also relates to intrinsic motivation objectives \citep{schmidhuber91_curious,chentanez04_intrin, still12},
such as forecasting complexity~\citep{grassberger86_towar}, statistical complexity~\citep{crutchfield89_infer}, predictive information~\citep{bialek01_predic}, and light cone complexity~\citep{shalizi04_quant,aaronson14_quant_rise_fall_compl_closed_system}.
These objectives are defined as Shannon information theoretic measures of past-future dependence.
Unlike Shannon information, which requires probability distributions, interactivity uses algorithmic information, allowing it to operate directly on individual sequences.
This sequence-based formulation provides a natural computational framework for continual learning, where an agent's behaviour is an individual sequence.
Moreover, by tying the agent's objective to its computational capacity, which is naturally constrained relative to the environment, our formalism further captures the big world hypothesis. 
\looseness=-1

Lastly, we develop a reinforcement learning algorithm for maximizing interactivity, and use it to construct a task that evaluates a learning algorithm's capability for continually adaptive behaviour. 
Specifically, we operationalize algorithmic complexity in terms of the prediction error.
Interactivity is then measured as the reduction in this error attributable to continual learning, relative to a baseline agent that stops learning. 
Interactivity-seeking behaviour thus involves learning a policy to steer the agent's behaviour to new experiences that are learnable, but that would have high prediction error without learning.
We show that interactivity-seeking agents create their own non-stationarity by changing their policy, thereby satisfying key desiderata of the continual learning problem: every agent is constrained by its capacity, and any agent that stops learning is suboptimal.
Our results indicate that, in this setting, deep nonlinear networks struggle to sustain interactivity, whereas deep linear networks effectively scale their interactivity with increased capacity.


\vspace{-1mm}
\section{Background}
\vspace{-1mm}
\label{sec:background}

We formalize an embedded agent through a computational perspective, viewing the environment as a computation that also simulates the agent.
Specifically, we consider a computationally universal environment that simulates any algorithm by implementing each computational step through its state transitions.
Our approach is general, by making use of the Church-Turing thesis, which asserts that all computationally universal systems are equivalent in what they can simulate, and that any such system can simulate another~\citep{church36,turing37_entsc}.
This allows us to adopt a particular model of computation (\emph{e.g.}, Turing machines) while retaining a general class of environments that are capable of simulating an embedded agent.
\looseness=-1

We characterize the capabilities of an agent, relative to its environment, in terms of its input-output behaviour as a finite sequence (\emph{i.e.}, a string).
In particular, we use the algorithmic complexity of a string, which is the length of the shortest program that computes it and halts~\citep{Kolmogorov:65, Solomonoff:64, Chaitin:66}.
\begin{defn}[Algorithmic Complexity]
  Given strings $x,y \in \Sigma^*$, where $\Sigma$ is a finite symbol-set and $\Sigma^*$ is the set of strings, 
  the conditional algorithmic complexity is the length of the shortest program, $|c|$, that halts and outputs $x$ given $y$ as input,
$$\mathbb{K}_{\,\mathcal{U}}(x|y) := \min \{ |c| : \mathcal{U}(c,y) = x\},$$ where $\mathcal{U}$ is a reference universal machine. The unconditional algorithmic complexity is given by $\mathbb{K}_{\,\mathcal{U}}(x) := \mathbb{K}_{\,\mathcal{U}}(x\,|\, \epsilon)$, where $\epsilon$ is the empty string.
\end{defn}
While algorithmic complexity depends on the choice of a reference universal machine, any specific choice affects the algorithmic complexity by, at most, an additive constant independent of the specific string~\citep{li19_introd_kolmog_compl_its_applic_edition}.
Moreover, in this work, we will consider the given computationally universal environment, defineed in the next section, as the canonical reference universal machine.


\vspace{-2mm}
\section{A Universal-Local Environment}
\vspace{-2mm}
\label{sec:env}
We begin by defining \emph{universal-local environments}, a general class of environments in which an agent can be embedded. 
These environments are characterized by two key properties: computational universality (Section \ref{sec:universal}), which establishes that any algorithm can be simulated by the environment's transition dynamics, and uniform locality (Section \ref{sec:local}), which ensures that any finite computation can be confined to a bounded portion of the environment's state-space.
In Section~\ref{sec:agent}, we use these properties to define an embedded agent as an automaton simulated on the environment's state-space.

\subsection{Markov Representation of a Computationally Universal Environment}
\label{sec:universal}

We first connect computational processes with Markov processes defined over a countable state-space with a transition function that is computable in polynomial-time with respect to the size of the state.
\begin{defn}
  An algorithmic Markov process, $\mathcal{E} = (\Omega, \Xi, \mathbb{T})$, is a discrete process defined on 
  a countable 
state-space,
  ${\Omega := \{\omega: \Xi \to \Sigma : |\omega|  < \infty\}}$,
  where $\Sigma$ is a finite symbol set with distinguished blank symbol $\square \in \Sigma$,
  $\Xi$ is a countable set used for indexing,
  and
  ${|\omega| := |\{\xi \in \Xi : \omega(\xi) \neq \square\}|}$ counts the number of non-blank symbols.
  Given an initial state $\omega \in \Omega$ with $|\omega|<\infty$, the process produces the next state $\omega' = \mathbb{T}(\omega)$, such that 
  $|\omega'|<\infty$ and
  where the transition function, $\mathbb{T}: \Omega \rightarrow \Omega$, is computable in time $O(\texttt{poly}(|\omega|))$.  
\end{defn}
The significance of this formulation is that any Turing machine can be represented as an algorithmic Markov process.
Specifically, the Markov state represents the entire configuration of the Turing machine, including its head position, current tape contents, and control state.
The Markov transition function represents the Turing machine's transition function, which is a lookup table that can be applied to the Markov state in polynomial-time.
We will consider only deterministic transitions, but non-deterministic transitions are also possible, and would involve multiple possible next states.
\begin{proposition}[Representing Turing machines]
  \label{prop:markov_env}
  The computational process followed by a Turing machine can be represented as an algorithmic Markov process.
\end{proposition}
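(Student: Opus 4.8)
The plan is to give an explicit encoding of a (deterministic) Turing machine $M = (Q, \Gamma, \delta, q_0)$ as states of an algorithmic Markov process and then show that one application of $\mathbb{T}$ reproduces one step of $M$. I would take the index set $\Xi$ to be the tape positions (say $\mathbb{Z}$) and set the symbol set to $\Sigma = \Gamma \cup (Q \times \Gamma)$, inheriting the blank symbol $\square$ from $\Gamma$. A configuration of $M$ is encoded by the state $\omega$ that assigns to each cell its tape symbol, except at the single cell under the head, where it assigns the marked pair $(q,a)$ recording both the current control state $q$ and the symbol $a$ being scanned. Because $M$ starts from a finite input and writes at most one cell per step, every reachable configuration has only finitely many non-blank cells, so $\omega \in \Omega$ with $|\omega| < \infty$, and the head position is encoded implicitly by the location of the unique marked cell.

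Next I would define $\mathbb{T}$ to mirror $\delta$. Given $\omega$, locate the marked cell $\xi^\star$ with $\omega(\xi^\star) = (q,a)$, read off $\delta(q,a) = (q', b, D)$ with $D \in \{-1,+1\}$, and produce $\omega'$ by (i) overwriting $\xi^\star$ with the plain symbol $b$, and (ii) placing the mark $(q', c)$ on the neighbouring cell $\xi^\star + D$, where $c$ is that cell's current symbol (possibly $\square$). If $q$ is a halting state I would set $\mathbb{T}(\omega) = \omega$, so that $\mathbb{T}$ is total and a halted configuration is a fixed point. By construction the orbit $\omega_0, \mathbb{T}(\omega_0), \mathbb{T}^2(\omega_0), \dots$ is in exact step-for-step correspondence with the computation of $M$; and since a step alters only a bounded number of cells, $|\omega'| \le |\omega| + O(1) < \infty$, so $\mathbb{T}$ indeed maps $\Omega$ into $\Omega$.

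The remaining obligation, and the step I expect to require the most care, is the polynomial-time bound: $\mathbb{T}$ must run in time $O(\texttt{poly}(|\omega|))$, where $|\omega|$ counts only non-blank symbols. The table lookup $\delta(q,a)$ is constant-time since $\delta$ is finite, and the rewrite touches only $\xi^\star$ and one neighbour, so the \emph{computational} content of a step is entirely local; the delicacy is purely bookkeeping on tape positions. If $\omega$ is stored as a list of (position, symbol) pairs, positions can in principle be far larger than $|\omega|$ (a machine may have cleared most of the tape yet sit far from the origin), so one must be careful that maintaining them costs $O(\texttt{poly}(|\omega|))$ and not more. I would resolve this by adopting a head-relative, contiguous-window representation in which the relevant tape content is stored as a finite string together with the head's offset inside it: a single step is then a local rewrite that shifts the marked offset by one and, when the head steps off an end, appends a single blank cell, requiring no arithmetic on absolute positions. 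Under this representation the per-step work is bounded by the stored window length, and the final task is to confirm that this length stays polynomially controlled along the computation, which yields the desired $O(\texttt{poly}(|\omega|))$ bound and completes the representation.
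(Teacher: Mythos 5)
Your construction of the state-space and transition function is essentially identical to the paper's: index set $\Xi = \mathbb{Z}$, a symbol set that marks the head cell with a pair $(q,a)$, a transition that mirrors $\delta$ locally, and the observation that each step changes $O(1)$ cells so finiteness of $|\omega|$ is preserved. (Your variant $\Sigma = \Gamma \cup (Q \times \Gamma)$ and the halting-as-fixed-point convention are harmless differences from the paper's encoding.)

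The gap is in your treatment of the polynomial-time clause, and it sits exactly in the step you deferred. The claim that the contiguous head-relative window ``stays polynomially controlled'' in $|\omega|$ is false: consider a machine that writes one non-blank symbol at the origin and then walks right over blank tape for $k$ steps. Throughout this run $|\omega| = 2$ (the cell at the origin plus the marked head cell), but your window spans $k+1$ cells, so the window length---and with it your per-step cost bound---is unbounded in terms of $|\omega|$. Worse, the bit-cost perspective that motivated your detour makes the clause unachievable by \emph{any} faithful representation: the gaps between non-blank cells are arbitrary integers, so even writing down an encoding of $\mathbb{T}(\omega)$ can require more than $\texttt{poly}(|\omega|)$ bits, no matter how cleverly the state is stored. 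The paper's proof implicitly resolves this the other way: it keeps the sparse list of (position, symbol) pairs and treats positions as atomic objects (unit-cost comparison and increment), so that computing $\mathbb{T}$ is a linear scan to locate the head plus $O(1)$ updates, i.e., $O(|\omega|)$ operations. The repair to your proof is therefore not to establish the window-length bound (you cannot), but to state that cost model explicitly and return to your original sparse representation, where your own observation that ``the computational content of a step is entirely local'' already finishes the argument.
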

All proofs of propositions and theorems can be found in Section \ref{sec:proofs} of the Appendix.
One consequence of Proposition \ref{prop:markov_env} is that there exists a universal Markov process (an algorithmic Markov process corresponding to a universal Turing machine).
This result highlights how an algorithmic Markov process is more general than the Markov processes typically considered in reinforcement learning.
In particular, a universal Markov process is capable of simulating any algorithm, which is crucial for defining an embedded agent in Section~\ref{sec:agent}.

\subsection{Defining Uniform Locality with Boundaried Markov Processes}
\label{sec:local}

Intuitively, uniform locality means that the transition function can be decomposed into identical local transition functions with dynamics that are determined by a finite portion of the state-space.
To make this precise, we first formalize a finite portion of the state-space as a \emph{substate-space}.
\begin{defn}
  Given an algorithmic Markov process, $\mathcal{E} = (\Omega, \Xi, \mathbb{T})$,
  a substate-space, $\Omega|_{F}$, is defined as a restriction of the state-space,  $\Omega$, to a finite index-set, ${F \in \mathcal{F}(\Xi) := \{I : I \subset \Xi, |I| < \infty\}}$: 
$\Omega|_F = \{\omega|_F : \omega \in \Omega\}$  where $\omega|_F : F \to \Sigma$  is defined by, $\omega|_F(\xi) := \omega(\xi) \text{ for all } \xi \in F$.
\end{defn}
We now consider the transition function restricted to a substate-space, ${\Omega|_F}$.
In particular,
we define a boundaried Markov process in which the restricted transition function,  $\mathbb{T}|_{F}$, depends on both the substate-space, $\Omega|_{F}$, and an additional substate-space, $\Omega|_{b(F)}$, called the boundary-space.
\begin{defn}
  Given an algorithmic Markov process, $\mathcal{E} = (\Omega, \Xi, \mathbb{T})$,
  a substate-space, $\Omega|_F$, admits a $k$-horizon boundaried Markov process if there exists a substate-space $\Omega|_{b^k(F)}$ (referred to as a boundary-space) and a restricted transition function $\mathbb{T}|_F^k : \Omega|_F \times \Omega|_{b^k(F)} \to \Omega|_F$
  that is equivalent to the $k$-step transition function on the substate-space: $\mathbb{T}|^k_F(\omega|_F, \omega|_{b^k(F)}) = \mathbb{T}^{(k)}(\omega)|_F$ for all $\omega \in \Omega$.
We denote the $k$-horizon boundaried Markov process as $\mathcal{E}|_{F}^k = (\Omega|_F, \Omega|_{b^k(F)}, \mathbb{T}|_F^k)$, and, if $k=1$, we refer to it simply as a boundaried Markov process, ${\mathcal{E}|_{F} = (\Omega|_F, \Omega|_{b(F)}, \mathbb{T}|_F)}$.
\end{defn}
Typically, the size of the boundary-space increases as the transition horizon $k$ becomes larger. 
This is because the current substate, $\omega|_F \in \Omega|_F$, and the current boundary, $\omega|_{b(F)} \in \Omega|_{b(F)}$, only determine the next substate, $\omega|_F' \in \Omega|_F$, and not the next boundary.

An algorithmic Markov process is uniformly local if every index admits a boundaried Markov process that is isomorphic to a single reference boundaried Markov process.
\begin{defn}[Uniform Locality]
  An algorithmic Markov process, $\mathcal{E} = (\Omega, \Xi, \mathbb{T})$, is uniformly local if there exists a boundary function, ${b: \Xi \to \mathcal{F}(\Xi)}$
  and a reference $\xi_\star \in \Xi$, such that every $\xi \in \Xi$ admits a boundaried Markov process isomorphic to the reference $\xi_\star$. That is, there exist bijections
  $\alpha_\xi: \Omega|_{\{\xi\}} \to \Omega|_{\{\xi_\star\}}$ and $\beta_\xi:  \Omega|_{b(\{\xi\})} \to \Omega|_{b(\{\xi_\star\})}$
  satisfying, for all $\omega \in \Omega$,
  $${\alpha_\xi \left(\mathbb{T}|_{\{\xi\}}(\omega|_{\{\xi\}}, \omega|_{b(\{\xi\})})\right) = \mathbb{T}|_{\{\xi_\star\}}(\alpha_\xi(\omega|_{\{\xi\}}), \beta_\xi(\omega|_{b(\{\xi\})}))}.$$
  \vspace{2mm}
\end{defn}
Uniform locality guarantees that the transition function can be computed by simultaneously applying an identical local transition function to each singleton of the state-space.
Consequently, the transition dynamics for any substate-space $\Omega|_F$ can be determined from the collective boundary, defined by $b(F) = \bigcup_{\xi \in F} b(\{\xi\})$.
This ensures that every substate-space admits a boundaried Markov process.

Thus, we use the term \emph{universal-local environment} for a universal Markov process that is also uniformly local.
We call it an environment because, as we will show, an embedded agent can be simulated as a boundaried Markov process within it.

\subsection{Example of a Universal-Local Environment: Conway's Game of Life}

Conway's Game of Life (or Life) is an example of a universal-local environment \citep{conway70}.
This environment is computationally universal because it can simulate a universal Turing machine \citep{berlekamp82_winnin_ways_your_mathem_plays_vol,rendell11_conway}.
A substate-space in Life is a finite subset of cells on the grid and the possible values taken by those cells.
Furthermore, Life is uniformly local because the one-step transition dynamics for each individual cell is determined by its $8$ adjacent neighbouring cells, which defines the boundary-space (see Figure \ref{fig:life}).
\looseness=-1

We point out Life as an existence proof for universal-local environments, but we are not suggesting to program an agent within it.
Instead, we use our formalism to characterize the constraints that would be faced by an agent if it were embedded in any such environment.

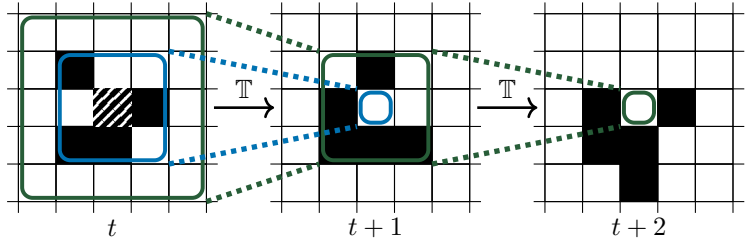
\begin{figure}
  \centering
  \begin{tikzpicture}[scale=0.5]
  \begin{scope}[xshift=14cm]

\foreach \x in {0,...,5}
    \draw (\x,-0.3) -- (\x,5.3);
\foreach \y in {0,...,5}
    \draw (-0.3,\y) -- (5.3,\y);
    \foreach \x in {0,...,4} \foreach \y in {0,...,4}
      \draw (\x,\y) grid +(1,1);
    \fill (3,2) rectangle +(1,1);
    \fill (1,2) rectangle +(1,1);
    \fill (1,1) rectangle +(1,1);
    \fill (2,1) rectangle +(1,1);
    \fill (2,0) rectangle +(1,1);
    \node at (2.5,-0.7) {$t+2$};

    \draw[mygreen, rounded corners, line width=1.5pt] (2.1,2.1) rectangle +(0.8,0.8);

  \end{scope}

  \begin{scope}[xshift=7cm]

\foreach \x in {0,...,5}
    \draw (\x,-0.3) -- (\x,5.3);
\foreach \y in {0,...,5}
    \draw (-0.3,\y) -- (5.3,\y);
  \foreach \x in {0,...,4} \foreach \y in {0,...,4}
    \draw (\x,\y) grid +(1,1);
  \fill (2,3) rectangle +(1,1);
  \fill (1,2) rectangle +(1,1);
  \fill (1,1) rectangle +(1,1);
  \fill (2,1) rectangle +(1,1);
  \fill (3,1) rectangle +(1,1);
  \node at (2.5,-0.7) {$t+1$};

      \draw[myblue, rounded corners, line width=1.5pt] (2.1,2.1) rectangle +(0.8,0.8);
      \draw[mygreen, rounded corners, line width=1.5pt] (1.1,1.1) rectangle +(2.8,2.8);
   \draw[dotted, mygreen, line width=2pt] (4,4) -- (9,3);
   \draw[dotted, mygreen, line width=2pt] (4,1) -- (9,2);
  \end{scope}

  \begin{scope}[xshift=0cm]

\foreach \x in {0,...,5}
    \draw (\x,-0.3) -- (\x,5.3);
\foreach \y in {0,...,5}
    \draw (-0.3,\y) -- (5.3,\y);

    \foreach \x in {0,...,4} \foreach \y in {0,...,4}
      \draw (\x,\y) grid +(1,1);
      \fill (1,3) rectangle +(1,1);
      \fill (2,2) rectangle +(1,1);
      \fill (3,2) rectangle +(1,1);
      \fill (1,1) rectangle +(1,1);
      \fill (2,1) rectangle +(1,1);
      \node at (2.5,-0.7) {$t$};

      \draw[myblue, rounded corners, line width=1.5pt] (1.1,1.1) rectangle +(2.8,2.8);
      \draw[mygreen, rounded corners, line width=1.5pt] (0.1,0.1) rectangle +(4.8,4.8);
      \fill[pattern={Lines[angle=45,distance=3pt,line width=1pt]}, pattern color=white] (2,2) rectangle +(1,1);
   \draw[dotted, mygreen, line width=2pt] (5,5) -- (8,4);
   \draw[dotted, mygreen, line width=2pt] (5,0) -- (8,1);
   \draw[dotted, myblue, line width=2pt] (4,4) -- (9,3);
   \draw[dotted, myblue, line width=2pt] (4,1) -- (9,2);
  \end{scope}
  \draw[->, line width = 1pt] (12.2,2.5) -- (13.8,2.5);
  \node[above] at (13,2.5){$\mathbb{T}$};

  \draw[->, line width = 1pt] (5.2,2.5) -- (6.8,2.5);
  \node[above] at (6,2.5){$\mathbb{T}$};

\end{tikzpicture}

\caption{
  \textbf{Conway's Game of Life is a cellular automaton and an example of a universal-local environment.}
  The state-space is an infinite 2D grid, $\Xi = \mathbb{Z}^2$, where each cell in the grid takes one of two values, $\Sigma = \{\text{black}, \text{white}\}$.
  Every cell uses the same local transition function in which the boundary-space consists of its $8$ adjacent neighbours. 
  The blue and green borders (\emph{left}) correspond to $1$ and $2$ horizon boundary-spaces, which determine the center cell at time-steps $t+1$ (\emph{middle}) and $t+2$ (\emph{right}), respectively. Longer-term transition dynamics depend on larger boundary-spaces.
}
  \label{fig:life}
  \vspace{-12mm}
\end{figure}


\vspace{-2mm}
\section{A Computationally-Embedded Agent}
\vspace{-1mm}
\label{sec:agent}
\vspace{-1mm}

We represent an embedded agent as an automaton simulated within the universal-local environment, formalized as a boundaried Markov process.
Specifically, we prove that when the automaton's boundary space coincides with its input-output space, it constitutes a formal agent-environment boundary~\citep{jiang19_value_funct_agent_envir_bound,harutyunyan20_what}. 
Consequently, the automaton is equivalent to a stateful policy interacting with a partially observable Markov decision process.
We then introduce \emph{interactivity} to measure a capability for continually adaptive behaviour, proving that interactivity-seeking automata are constrained by their computational capacity.

\vspace{-2mm}
\subsection{Embedding an Automaton in a Universal-Local Environment}
\vspace{-1mm}

A universal-local environment can simulate any algorithm; this property enables us to define an automaton, $\mathcal{A}$, within the environment's state space, $\Omega$, such that its operation is simulated by the environment's transition dynamics, $\mathbb{T}$.
Moreover, uniform locality ensures that the automaton can be represented as a boundaried Markov process (see Figure \ref{fig:agents}, left).

\begin{defn}
  Given a universal-local environment, $\mathcal{E} = (\Omega, \Xi, \mathbb{T})$,
  an embedded automaton is defined by $\mathcal{A} := (\Omega|_X, \Omega|_Y, \Omega|_\Theta, u, \pi)$, where $\Omega|_\Theta$ is the automaton's internal state space, $\Omega|_{X}, \Omega|_{Y}$ are input and output spaces, $\pi: \Omega|_X \times \Omega|_{\Theta} \to \Omega|_Y$ is the output function, and $u : \Omega|_X \times \Omega|_{\Theta} \to \Omega|_\Theta$ is the automaton's internal state update function. 
\end{defn}
Relating this to an agent in reinforcement learning, we may think of the inputs as observations, the internal state as the parameters of a function approximator, the outputs as actions, the internal state update function as a learning rule, and the output function as a policy. The input-space may also provide an external reward to the automaton, but this need not be the case.
\begin{proposition}[Automaton-Environment Relationship]
  \label{prop:agent}
  Given an embedded automaton, 
${\mathcal{A} = (\Omega|_X, \Omega|_Y, \Omega|_\Theta, u, \pi)}$, 
if there exists a horizon $k$ such that $b^k(\Theta) = X$, then:
  \begin{enumerate}
          \vspace{-1mm}
    \item The automaton is equivalent to a $k$-horizon boundaried Markov process
          \vspace{-1mm}
    \item The automaton's environment is a partially observable 
          Markov decision process.
          \vspace{-1mm}
    \item The automaton's interaction is equivalent to a stateful policy acting on the environment
          \vspace{-1mm}
  \end{enumerate}
\end{proposition}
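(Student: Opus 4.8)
The plan is to establish the three claims in order: the first follows directly from uniform locality, the second from constructing a latent process surrounding the automaton, and the third from reinterpreting the pair $(\pi,u)$. For the first claim, I would apply uniform locality to the substate-space $\Omega|_\Theta$, which guarantees that it admits a $k$-horizon boundaried Markov process $\mathcal{E}|_\Theta^k = (\Omega|_\Theta, \Omega|_{b^k(\Theta)}, \mathbb{T}|_\Theta^k)$. The hypothesis $b^k(\Theta)=X$ identifies the boundary-space with the input-space, so the restricted transition has signature $\mathbb{T}|_\Theta^k : \Omega|_\Theta \times \Omega|_X \to \Omega|_\Theta$ and, by the defining property of a boundaried Markov process, satisfies $\mathbb{T}|_\Theta^k(\omega|_\Theta, \omega|_X) = \mathbb{T}^{(k)}(\omega)|_\Theta$ for every $\omega \in \Omega$. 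Since the automaton's internal state is simulated by the environment, its update function must advance $\Omega|_\Theta$ exactly as the environment does over $k$ steps; hence $u$ coincides with $\mathbb{T}|_\Theta^k$ after relabelling its arguments, so $\mathcal{A}$ is equivalent to $\mathcal{E}|_\Theta^k$.

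For the second claim, I would peel the automaton off the environment and treat the remaining index-set $\Xi \setminus \Theta$ as a latent process. I would take the POMDP state to be $\omega|_{\Xi \setminus \Theta}$, the action to be the output $\omega|_Y$, and the observation to be the input $\omega|_X$. The deterministic $k$-step map $\mathbb{T}^{(k)}$, restricted to the complement of $\Theta$ with the output cells $Y$ supplied as the control, induces the transition kernel, while projection onto $X$ yields the observation function. Partial observability is then immediate: the latent state ranges over the infinite index-set $\Xi \setminus \Theta$, whereas the automaton perceives only the finite portion $X = b^k(\Theta)$ of it. The input-output sets $X$ and $Y$ thus delimit the agent-environment boundary. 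For the third claim, the pair $(\pi,u)$ is read directly as a stateful policy acting on this POMDP: $\pi : \Omega|_X \times \Omega|_\Theta \to \Omega|_Y$ maps the current observation and memory to an action, and $u : \Omega|_X \times \Omega|_\Theta \to \Omega|_\Theta$ advances the memory, so iterating the pair against the latent process reproduces the embedded interaction exactly.

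The main obstacle I expect is the second claim, specifically checking that excising $\Theta$ yields a \emph{well-defined} POMDP. The delicate point is that the latent process must depend on the automaton only through its output $Y$ and must expose itself only through $X$; establishing this requires verifying that the disjoint index-sets $\Theta$, $X$, and $Y$ are compatible with the boundary function $b$, so that $Y$ genuinely mediates the agent's influence while $X = b^k(\Theta)$ is exactly the boundary the latent process presents. Once this separation is shown to be consistent, and the residual unobserved cells are identified as the source of partial observability, the first and third claims follow almost immediately from the definitions.
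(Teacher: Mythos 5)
Your proposal matches the paper's proof in structure, and parts 1 and 3 are essentially identical to what the paper does: the paper likewise identifies $u$ with the restricted transition of the boundaried process $(\Omega|_\Theta, \Omega|_{X}, \mathbb{T}|_\Theta^k)$ under the hypothesis $b^k(\Theta)=X$ (it additionally writes down a second boundaried process for the output, $(\Omega|_Y, \Omega|_{X\cup\Theta}, \mathbb{T}|_Y^k)$ with transition $\pi$, which you omit but which adds little), and it reads the pair $(\pi,u)$ as a stateful policy exactly as you do. The genuine divergence is in part 2. The paper takes the POMDP's latent state to be the \emph{full} state $\Omega$, with transitions given directly by $\mathbb{T}$ and the observation function being projection onto $\Omega|_X$; partial observability is then immediate because the automaton sees only $\Omega|_X$ rather than all of $\Omega$, and no well-definedness question ever arises. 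You instead excise the agent and take the latent state to be $\omega|_{\Xi\setminus\Theta}$, which is the cleaner agent--environment separation, but --- as you yourself flag --- it requires that the complement's dynamics depend on $\Theta$ only through $Y$, roughly $b(\Xi\setminus\Theta)\cap\Theta \subseteq Y$. Be aware that this does \emph{not} follow from the stated hypothesis $b^k(\Theta)=X$, which constrains influence flowing \emph{into} the agent, not out of it; the paper's whole-state construction never needs such an outgoing-influence condition (at the cost of a somewhat degenerate POMDP whose latent state contains the agent's own memory). So your route is defensible and arguably more faithful to the idea of an agent--environment boundary, but to complete it you would have to either add the outgoing-boundary condition as an explicit assumption or retreat to the paper's construction with state space $\Omega$.
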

Now that we have defined both the embedded automaton and its environment within the same universal-local environment, we can show that any such automaton is constrained by the size of its internal state-space, which determines its memory and computational capacity.\looseness=-1
\begin{proposition}[Implicitly Constrained]
  \label{prop:implicit}
  The capacity of an embedded automaton is upper bounded by the size of its internal state space, $|\Theta|$, which is finite. Thus, there exists input-output sequences that the automaton cannot realize.
\end{proposition}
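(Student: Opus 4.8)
The plan is to separate the proposition into its two assertions---finiteness of capacity and existence of unrealizable behaviours---and to connect them through a pigeonhole/information-bottleneck argument on the automaton's internal state.

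First I would establish the capacity bound. By definition $\Theta \in \mathcal{F}(\Xi)$ is a finite index-set and $\Sigma$ is a finite symbol-set, so the internal state space $\Omega|_\Theta$ is the set of all maps $\Theta \to \Sigma$ and therefore has exactly $|\Sigma|^{|\Theta|}$ elements. Identifying the automaton's memory capacity with the information its internal state can retain, this gives a bound of $\log_2 |\Sigma|^{|\Theta|} = |\Theta| \log_2 |\Sigma|$ bits, which scales linearly with the size $|\Theta|$ of the internal-state index-set and, crucially, is independent of how long the automaton has been running. This is the sense in which capacity is upper bounded by $|\Theta|$ and is finite. Next I would show that the automaton's future input-output behaviour depends on its past only through the current internal state: iterating the update map $u$ over an input history $x_{1:t} \in (\Omega|_{X})^{t}$ from a fixed initial state produces an internal state $\theta_t \in \Omega|_\Theta$, and both the current output (through $\pi$) and every subsequent internal state are functions of $x_{1:t}$ only through $\theta_t$. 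Hence the map from input histories to future-output behaviours factors through the finite set $\Omega|_\Theta$, so the automaton can induce at most $|\Sigma|^{|\Theta|}$ behaviourally distinct continuations.

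I would then conclude by counting. The number of distinct input histories $\lvert \Omega|_{X}\rvert^{t}$ grows without bound in $t$, whereas the number of reachable internal states never exceeds $|\Sigma|^{|\Theta|}$; by pigeonhole, for $t$ large enough two distinct histories $x_{1:t} \neq x'_{1:t}$ must drive the automaton to the same internal state, after which it emits identical outputs on every shared continuation. Fixing a target behaviour whose required outputs after these two histories differ on some common continuation---for instance, one that must reproduce a symbol received an unbounded number of steps earlier, which demands more than $|\Theta| \log_2 |\Sigma|$ bits of recall---yields an input-output sequence that no finite-state choice of $u$ and $\pi$ can realize, establishing the claim.

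The step I expect to be the main obstacle is pinning down ``realize'' so that the conclusion reflects the finite-memory limitation rather than the trivial fact that a deterministic automaton produces a single output per input: the force of the argument is that two histories colliding in $\Omega|_\Theta$ become permanently indistinguishable, so the unrealizable sequences must be exhibited as members of a behaviour family that collectively exceeds the $|\Theta| \log_2 |\Sigma|$-bit bottleneck. Expressing this capacity bound in the paper's algorithmic-complexity language, $\mathbb{K}_{\,\mathcal{U}}$, rather than in Shannon bits is a secondary subtlety worth handling with care.
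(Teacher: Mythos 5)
Your proposal is correct, and its core engine---finiteness of $\Omega|_\Theta$ plus a pigeonhole argument on the internal state---is the same one the paper relies on, but the two proofs instantiate that engine differently. The paper's proof (i) bounds capacity by the number of internal configurations, (ii) invokes the Chomsky hierarchy, noting that the automaton is a finite-state machine confined to regular languages while its Turing-complete environment can generate recursively enumerable ones, and (iii) argues via \emph{periodicity}: for $T \gg |\Theta|$ the automaton must revisit an internal state, so any behaviour sequence with period exceeding $|\Theta|$ is unrealizable; it then appends further discussion of minimum-size and simulation-time constraints. You instead run a Myhill--Nerode-style argument: two distinct input histories must collide in the finite state space, after which the automaton is permanently unable to distinguish them, so any behaviour family demanding more than $|\Theta|\log_2|\Sigma|$ bits of recall (e.g., echoing a symbol from unboundedly far in the past) is unrealizable. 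Your route is more self-contained and elementary (no appeal to language-theoretic separations), it handles the open-loop setting where inputs arrive externally rather than the closed-loop setting the paper's periodicity argument implicitly assumes, and your explicit treatment of what ``realize'' must mean addresses a genuine looseness in the statement that the paper's proof glosses over. What the paper's framing buys in exchange is the narrative point of the proposition: by contrasting regular against recursively enumerable languages it makes the constraint explicitly \emph{relative to the universality of the environment}, which is the ``implicit constraint'' the paper wants to emphasize. One minor caveat for your version: the history-counting step needs $\lvert \Omega|_{X}\rvert \geq 2$; in the degenerate single-input case you would fall back on exactly the paper's periodicity argument.
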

An embedded automaton is constrained by its capacity, which constrains the behaviours that it can produce.
While more complex behaviour can be produced with more capacity, we will show that this constraint specifically limits its ability to produce behaviour that continually adapts to past experience.
\looseness=-1

\begin{figure}
  \centering
    \begin{tikzpicture}[scale=0.75, line width=1pt]
    \begin{scope}[xshift=0cm]

      \draw[dotted] (4.6,0.4) -- (10.2,1.0);
      \draw[dotted] (3.0,2.0) -- (7.0,4.2);

      \fill[pattern=grid, pattern color=gray!50] (8.8, 1.0) rectangle +(1.4, 3.2);

      \fill[pattern=grid, pattern color=gray!50] (7.8, 2.2) rectangle +(1.0, 0.9);
      \fill[pattern=grid, pattern color=gray!50] (5.9, 1.3) rectangle +(1.0, 1.0);
      \fill[pattern=grid, pattern color=gray!50] (5.9, 2.9) rectangle +(1.0, 1.0);
      \draw[-, line width=6pt] (7, 1.7) -- (7,1.0) -- (10,1.0) -- (10,4.2) -- (7.0,4.2) --  (7.0,3.5);

      \draw[-, line width=6pt] (7, 1.9) -- (7.0,3.3);

      \draw[->] (6.6,3.4) -- (7.6,3.4);
      \draw[->] (7.2,1.8) -- (6.6,1.8);

      \node at (6.25,3.1) {$\Omega|_{X}$};
      \node at (6.25,1.5) {$\Omega|_{Y}$};

      \draw[-] (8.6,2.2) -- (8.6,1.8);
      \draw[->] (8.6,1.8) -- (7.4,1.8);

      \draw[-] (8.6,3.4) -- (7.8,3.4);
      \draw[->] (8.6,3.4) -- (8.6,3.0);

      \node at (8.0,3.75) {$u$};
      \node at (7.6,1.4) {$\pi$};
      \node at (8.6,2.6) {$\Omega|_{\Theta}$};

      \draw[-] (7.9,2.6) -- (7.7,2.6);
      \draw[->] (7.7,2.6) -- (7.7,3.3);
      \draw[-] (7.6,3.3) rectangle +(0.2, 0.2);
      \draw[-] (7.2,1.7) rectangle +(0.2, 0.2);

      \filldraw[fill=black] (7.3,3.4) circle (0.05);
      \draw[->] (7.3,3.4) -- (7.3,1.9);
  
      \node at (2.5,5.5) {$\omega_{t} \in \Omega$};
      \foreach \x in {0,...,25} {
      \draw (0.2*\x,0.0) -- (0.2*\x,5);
      \draw[-] (0.2*\x,-0.12) -- (0.2*\x,0.0);
      \draw[-] (0.2*\x,5.12) -- (0.2*\x,5.0);
      }

      \foreach \y in {0,...,25} {
      \draw (0.0,0.2*\y) -- (5,0.2*\y);
      \draw[-] (-0.12,0.2*\y) -- (0.0, 0.2*\y);
      \draw[-] (5.0,0.2*\y) -- (5.12, 0.2*\y);
      }

      \fill[white] (3.0,0.4) rectangle +(1.6,1.6);
      \draw[-,line width=3pt] (3.05,0.45) rectangle +(1.5,1.5);

      \fill[white] (3.0,0.85) rectangle +(0.4,0.1);
      \fill[white] (3.0,1.45) rectangle +(0.4,0.1);

      \draw[->, myblue] (2.7,1.5) -- (3.3,1.5);
      \draw[->, myblue] (3.3,0.9) -- (2.7,0.9);
      \node at (3.8,1.2) {$\mathcal{A}$};
    \end{scope}
    
      \begin{scope}[xshift=12cm]
      \foreach \x in {0,...,5}
      \draw (\x,-0.3) -- (\x,5.3);
      \foreach \y in {0,...,5}
      \draw (-0.3,\y) -- (5.3,\y);

      \foreach \x in {0,...,4} \foreach \y in {0,...,4}
      \draw (\x,\y) grid +(1,1);

      \draw[fill=white, line width=0pt] (0,0) rectangle (5,5);
      \fill[pattern=grid, pattern color=gray!50] (3.5,1.25) rectangle +(1.5,2.5);
      \draw[dashed, Triangle-Triangle] (0.15, 0.15) -- (4.85, 4.85);
      \draw[fill=gray!20] (2,2) rectangle +(1,1);
      \draw[line width=6pt] (0,0) rectangle (5,5);
      \draw[fill=white, line width=6pt] (1.5,1.5) rectangle +(2,2);

      \draw[white, dotted, line width=6pt] (2,3.5) -- (3,3.5);
      \draw[white, dotted, line width=6pt] (2,1.5) -- (3,1.5);

      \node[above,align=center] at (2.52,4) {$\pi$};
      \fill[white] (2.42,3.9) rectangle +(0.2, 0.2);
      \draw[-] (2.42,3.9) rectangle +(0.2, 0.2);

      \node[below,align=center] at (4,1.0) {$u$};
      \fill[white] (3.9,0.9) rectangle +(0.2, 0.2);
      \draw[-] (3.9,0.9) rectangle +(0.2, 0.2);

      \draw[<-, line width=1pt] (2.42,4) -- (1.0,4) -- (1.0,1) -- (2.52,1)  -- (2.52,2);
      \filldraw[fill=black] (2.52,1.0) circle (0.05);

      \draw[->, line width=1pt] (2.52,1)  -- (3.9,1);
      \draw[->, line width=1pt] (4,1.1)  -- (4,2.3);
      \draw[->, line width=1pt] (4, 2.8) -- (4, 4) -- (2.62,4);
      \draw[->, line width=1pt] (2.52, 3.9) -- (2.52, 3);
      \draw[->, line width=1pt] (4.5,2.5)  -- (4.7,2.5) -- (4.7, 1) -- (4.1, 1);

      \node[align=center] at (2.5,2.5) {$\Omega|_{b(\Theta)}$};
      \node at (2.25,4.6) {Self-Predicting $\mathcal{A}$gent};
      \node at (4.1,2.5) {$\Omega|_\Theta$};
    \end{scope}
  \end{tikzpicture}

  \caption{
    \textbf{An illustrative depiction of a computationally-embedded agent interacting with its environment.} An embedded automaton,  $\mathcal{A} = (\Omega|_X, \Omega|_Y, \Omega|_\Theta, u, \pi)$, represents an agent embedded within the universal-local environment.
    \textbf{Left:} The agent is simulated by a boundaried Markov process within the environment, in which it iteratively receives an input from the environment, $x \in \Omega|_X$, produces the corresponding output $y = \pi(x; \theta) \in \Omega|_Y$, and updates its internal state, $\theta' = u(x; \theta) \in \Omega|_\Theta$. 
    \textbf{Right:} We will also consider an idealized setting in which a self-predicting agent exerts full control over its experience by reading and writing to an internal boundary-space, $\Omega|_{b(\Theta)}$.
    \looseness=-1
  }
  \vspace{-4mm}
  \label{fig:agents}
\end{figure}
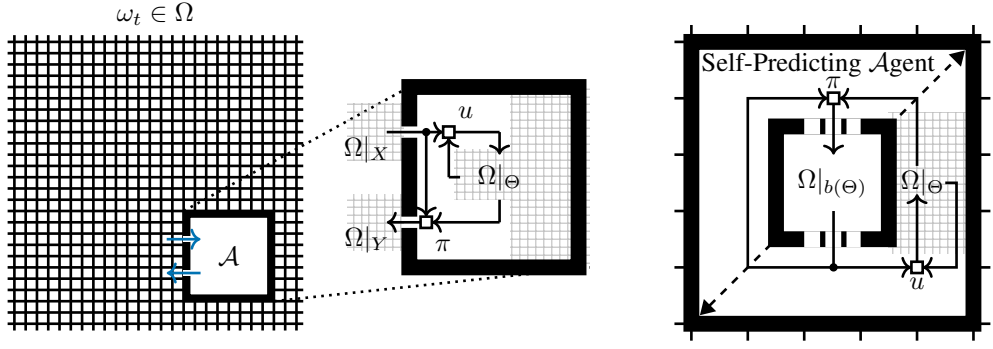

\subsection{Interactivity as a Computational Measure of Adapatability}

An agent's capability for learning can be characterized by its ability to adapt its future behaviour using its past experience.
We propose \emph{interactivity} to measure this capability directly in terms of algorithmic complexity.
Specifically, interactivity measures the difference between the algorithmic complexity of future behaviour with and without conditioning on past experience.

Following Proposition \ref{prop:agent}, we represent an embedded agent as an embedded automaton $\mathcal{A}$ where its input space constitutes its boundary-space, $\Omega|_{X} = \Omega|_{b^k(\Theta)}$.
Thus, the behaviour of the agent is determined by the values taken on the boundary-space, $b_{t}  \in \Omega|_{b^k(\Theta)}$ which combines the input and the corresponding output, $b_t = (x_t, y_t)$.
At time $t$, the behaviour can be separated into finite sequences: the past, $b_{0:t-1} := b_{0} b_{1}\cdots b_{t-1}$, and the $T$-horizon future, $b_{t:t+T-1} := b_{t} b_{t+1} \cdots b_{t+T-1}$.\looseness=-1
\begin{defn}[Interactivity]
Given an embedded automaton, ${\mathcal{A} := (\Omega|_X, \Omega|_Y, \Omega|_\Theta, u, \pi)}$, we define its $T$-horizon interactivity relative to its environment, $\mathcal{E}$, as the difference between the algorithmic complexity of its future behaviour with and without conditioning on its past behaviour,
  $$\mathbb{I}_{T}(\mathcal{A}\, |\, x_t, b_{0:t-1}) := \mathbb{K}_{\mathcal{E}}(b_{t:t+T-1} \, | \, \epsilon) - \mathbb{K}_{\mathcal{E}}(b_{t:t+T-1} \, | \, b_{0:t-1}),$$
  where $\epsilon$ is the empty string, $x_t$ is the current observation, $\theta_{t-1}$ is the internal state, $y_t = \pi(x_{t}; \theta_{t-1})$ is the action, $\theta_t = u(x_{t}; \theta_{t-1})$ is the next internal state, and $b_{t} = (x_{t}, y_{t})$ is the next behaviour tuple.
\end{defn}
That is, interactivity measures the predictable complexity of an agent's future behaviour, given its past behaviour.
Interactivity is high if both (i) the future behaviour, $b_{t:t+T-1}$, has high unconditional algorithmic complexity, and (ii) the past behaviour, $b_{0:t-1}$, is predictive of this future behaviour, thereby yielding a low conditional algorithmic complexity.
Interactivity-seeking behaviour thus balances complexity and predictability with respect to the agent's past behaviour, similar to how definitions of open-endedness balance novelty and learnability with respect to an observer \citep{hughes24_posit}.
 \looseness=-1

\subsection{An Interactivity-Seeking Agent Faces a Big World}
\label{sec:thesis}

The interactivity of any embedded agent is always constrained by its capacity.
That is, an interactivity-seeking agent can only sustain a given level of interactivity with a given capacity.
However, an interactivity-seeking agent can always use additional capacity to achieve higher interactivity.
\begin{theorem}
  \label{thm:big_world}
Given an embedded agent, $\mathcal{A} = (\Omega|_X, \Omega|_Y, \Omega|_\Theta, u, \pi)$, its maximum interactivity is asymptotically upper and lower bounded by a quantity that depends on its capacity.
\end{theorem}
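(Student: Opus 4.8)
The plan is to establish matching asymptotic bounds showing that the maximum interactivity attainable by an agent of capacity $|\Theta|$ is, up to additive terms independent of $|\Theta|$, equal to the number of bits its internal state can store, $\log|\Omega|_\Theta| = |\Theta|\log|\Sigma|$. The unifying idea is that the internal state $\theta_{t-1}$ acts as a sufficient statistic for the agent's past: because the update rule $u$ is deterministic with a fixed, constant-size description, $\theta_{t-1}$ is computable from $b_{0:t-1}$ using only $O(1)$ bits relative to the reference machine $\mathcal{E}$, and by the Markov structure of Proposition \ref{prop:agent} the agent's future behaviour depends on the past only through $\theta_{t-1}$.

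For the upper bound, I would work in the self-predicting setting of Figure \ref{fig:agents} (right), where the future behaviour $b_{t:t+T-1}$ is a deterministic unfolding of $\theta_{t-1}$ obtained by simulating $\mathcal{E}$ forward. This immediately gives $\mathbb{K}_{\mathcal{E}}(b_{t:t+T-1}) \le \mathbb{K}_{\mathcal{E}}(\theta_{t-1}) + O(1)$, and since the conditional term in the definition of interactivity is nonnegative,
$$\mathbb{I}_T(\mathcal{A}\mid x_t, b_{0:t-1}) \;\le\; \mathbb{K}_{\mathcal{E}}(b_{t:t+T-1}) \;\le\; \mathbb{K}_{\mathcal{E}}(\theta_{t-1}) + O(1) \;\le\; |\Theta|\log|\Sigma| + O(1),$$
where the last step bounds the complexity of a state by the size of the finite internal state-space (Proposition \ref{prop:implicit}).

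For the lower bound, I would exhibit a witness that nearly saturates this value. A counting argument guarantees an incompressible state $\theta^\star \in \Omega|_\Theta$ with $\mathbb{K}_{\mathcal{E}}(\theta^\star) \ge |\Theta|\log|\Sigma| - O(1)$. I would then choose the functions $(\pi, u)$ and a past trajectory that drives the automaton to $\theta_{t-1} = \theta^\star$, and have the self-predicting agent, over a horizon $T \ge |\Theta|$, read out the contents of its state into its behaviour. The past then fixes the state, which fixes the future, so $\mathbb{K}_{\mathcal{E}}(b_{t:t+T-1}\mid b_{0:t-1}) = O(1)$, while the future inherits the incompressibility of $\theta^\star$, giving $\mathbb{K}_{\mathcal{E}}(b_{t:t+T-1}) \ge \mathbb{K}_{\mathcal{E}}(\theta^\star) - O(1)$. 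Subtracting yields $\mathbb{I}_T \ge |\Theta|\log|\Sigma| - O(1)$, which matches the upper bound asymptotically and shows the maximum interactivity is $\Theta(|\Theta|\log|\Sigma|)$.

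I expect the main obstacle to be the upper bound in the fully general embedded setting rather than the idealized self-predicting one. When the future inputs $x_{t:t+T-1}$ are generated by the surrounding environment, they may be correlated with the past through the environment's own state, so $\theta_{t-1}$ is no longer a complete sufficient statistic and information can flow from past to future outside the agent. Handling this cleanly requires the symmetry-of-information (chain rule) estimate $\mathbb{I}_T \le \mathbb{K}_{\mathcal{E}}(\theta_{t-1}) + O(\log)$, whose derivation must both account for the environment-carried information so that it cancels between the conditional and unconditional terms, and control the accumulated $O(\log)$ slack as well as the residual dependence on the reference machine $\mathcal{E}$.
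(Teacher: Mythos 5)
Your proposal is sound and matches the paper's overall two-sided structure, but the two halves differ in instructive ways. Your upper bound is essentially the paper's argument: drop the nonnegative conditional term, then bound the unconditional complexity of agent-generated behaviour by the agent's capacity. You make explicit, via $\mathbb{K}_{\mathcal{E}}(b_{t:t+T-1}) \le \mathbb{K}_{\mathcal{E}}(\theta_{t-1}) + O(1)$ in the self-predicting setting, an assumption the paper leaves implicit in the phrase ``because the behaviour is generated by the automaton.'' The obstacle you flag at the end is real: when the environment supplies inputs, past--future dependence can be routed around the agent entirely (the environment can simply replay correlated inputs into the future), so the capacity upper bound genuinely requires restricting attention to dependence mediated by the agent. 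The paper's proof does not address this either, so you have identified a gap rather than created one.

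Where you genuinely diverge is the lower bound. The paper uses a split-capacity construction: the agent devotes $\alpha C(\mathcal{A})$ to simulating a smaller automaton and $(1-\alpha)C(\mathcal{A})$ to storing history, then runs an enumeration over $T$-horizon futures, selecting outputs that are complex unconditionally but predictable from the stored history; this yields interactivity $\alpha C(\mathcal{A}) - O(1)$ for some $\alpha < 1$. You instead use a memorize-and-replay witness: a counting argument supplies an incompressible $\theta^\star$, the past trajectory writes it into the internal state, and the future reads it out, giving conditional complexity $O(1)$ and unconditional complexity $|\Theta|\log|\Sigma| - O(1)$. Your construction is tighter (capacity minus an additive constant, rather than a constant fraction of capacity) and avoids the paper's awkward step in which the resource-bounded agent must itself evaluate algorithmic complexities during its enumeration. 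Two pieces of bookkeeping would complete it: first, your identification of capacity with $|\Theta|\log|\Sigma|$ counts only state bits and not the descriptions of $u$ and $\pi$, so either charge those to the $O(1)$ uniform dynamics of the embedding environment or fold them into $C(\mathcal{A})$ as the paper implicitly does, so both bounds refer to the same quantity; second, make sure the conditioning past actually reveals $\theta^\star$ (e.g., the past is itself a first read-out of the state, or consists of inputs that spell it out), since an incompressible state reached from a simple initial condition with a simple past is a contradiction --- your remark that ``the past fixes the state'' is doing exactly this work and deserves to be stated as a requirement of the construction.
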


The goal of an interactivity-seeking agent can be understood as balancing the complexity and predictability of its future behaviour, given its past behaviour. 
Seeking to achieve this goal requires that the agent continually adapt to its experience, learning about its computational limitations, and tracking its environment with predictions learned within these limitations.
This suggests the following interactivity thesis:
\begin{center}
  \emph{
    Interactivity measures a capability for continually adaptive behaviour.
  }
\end{center}
We refer to this as the interactivity thesis, rather than a hypothesis, to reflect its speculative and philosophical nature.
With low interactivity, the future behaviour of an agent is either
i) simple, or ii) complex and unpredictable.
In either case, the thesis asserts that the agent's capability for continually adaptive behaviour is limited.
A simple agent has a limited range of possible behaviours and thus has a relatively lower capability for adaptive behaviour.
A complex agent could have a relatively high capability for adaptive behaviour, but only if its future behaviour is influenced by, and can be predicted from, past experience.
Embracing the interactivity thesis naturally leads to a spectrum of adaptive capability.
\looseness=-1


\section{Maximizing Agent-Relative Interactivity with Reinforcement Learning}
\label{sec:approx}
Maximizing interactivity poses a fundamental computational challenge: it depends on algorithmic complexity, which is generally uncomputable.
While algorithmic complexity can be computed for finite automata through exhaustive program enumeration \citep{li19_introd_kolmog_compl_its_applic_edition}, this requires computational resources that exceed the automaton's own capacity.
Consequently, an interactivity-seeking agent must approximate interactivity rather than compute it exactly. 

To approximate interactivity, we take a distortion-rate view of algorithmic complexity, which measures complexity in terms of prediction error under a constrained reference machine, rather than under an unconstrained universal Turing machine \citep{vereshchagin10_rate_kolmog}.
The embedded agent, being an automaton, provides a natural choice for the constrained reference machine.
We then augment this agent to include a predictor function as a function of agent-state, and measure agent-relative complexity by the incurred temporal difference errors under this predictor.
\begin{defn}
  Given an embedded agent $\mathcal{A} = (\Omega|_X, \Omega|_Y, \Omega|_\Theta, u, \pi)$ with a predictor ${v : \Omega|_{X\cup Y} \times \Omega|_\Theta  \to \Omega|_{X\cup Y}}$ and $\gamma \in [0,1]$, the agent-relative complexity of future behaviour $b_{t:t+T-1}$, conditioned on past behaviour $b_{0:t-1}$, is the sum of future temporal difference errors, $$\hat{\mathbb{K}}_{\mathcal{A}}(b_{t:t+T-1}|b_{0:t-1}) := \sum_{k=0}^{T-1}\left(b_{t+k} + \gamma v(b_{t+k}; \theta_{t+k-1}) - v(b_{t+k-1}; \theta_{t+k-1})\right)^2,$$
where $\theta_{t+k-1}$ is the agent-state after processing $b_{0:t-1}$ followed by $b_{t:t+k-1}$ through repeated application of the agent's update function $u$.
\end{defn}

Complexity, measured in this way, is relative to the agent's capabilities.
If the future prediction errors are large (small), then the future behaviour is relatively complex (simple).
These future prediction errors depend crucially on the agent through its predictor, its current agent-state, its learning algorithm, its policy, and its observations from the environment.
With this prediction error formulation of complexity, we now consider the agent-relative interactivity,
$$\hat{\mathbb{I}}_{T}(\mathcal{A}\, |\, x_t, b_{0:t-1}) := \hat{\mathbb{K}}_{\mathcal{A}}(b_{t:t+T-1} \, |\, \epsilon) - \hat{\mathbb{K}}_{\mathcal{A}}(b_{t:t+T-1} | b_{0:t-1}).$$

\subsection{Learning to Maximize Agent-Relative Interactivity}

We now develop a reinforcement learning algorithm for maximizing agent-relative interactivity.
Our algorithm is summarized in the following three steps:
(i) learning a prediction of the discounted future behaviour using a value function,
(ii) computing the agent-relative interactivity, defined as the difference between static prediction errors (unconditional complexity) and dynamic prediction errors (conditional complexity), and
(iii) meta-learning a policy to maximize agent-relative interactivity.

The first step involves learning a prediction of the future input-output behaviour, where we consider the deterministic setting and omit the expectation over trajectories.
This value function predicts both future observations and actions, generalizing
successor features \citep{barreto17_succes} and the successor representation \citep{dayan93_improv}.
Specifically, with temporal difference learning \citep{sutton84_tempor,sutton88_learn}, we train a value function to predict the discounted sum of future input-output behaviour, $$v(b_t; \theta_t) \approx \sum_{k=0}^\infty \gamma^{k}b_{t+k+1},\quad \delta_{t+k}(\theta) = b_{t+k} + \gamma v(b_{t+k}; \theta) - v(b_{t+k-1}; \theta),$$
where $\theta_{t+k}$ can be understood as dynamic parameters which are updated using semi-gradient TD($0$): $\theta_{t+k} = \theta_{t+k-1} + \eta\delta_{t+k}(\theta_{t+k-1})\nabla_\theta v(b_{t+k-1}; \theta)\big|_{\theta=\theta_{t+k-1}}$, where $\eta$ is a step-size. 

For the second step, we measure the conditional and unconditional complexity terms using dynamic and static temporal difference errors, respectively.
From the first step, we can readily compute the conditional complexity using the dynamic temporal difference errors just described,
$$\hat{\mathbb{K}}_{\mathcal{A}}(b_{t+1:t+T} | b_{0:t-1}) = \sum_{k=0}^{T-1}\delta^2_{t+k}(\theta_{t+k-1}).$$
Unlike the conditional complexity term, the unconditional complexity term is not conditioned on past experience, which means means that it cannot be computed using the dynamic parameters.
Instead, computing the unconditional complexity involves a static and unchanging reference state, $\theta_{ref}$.
One convenient choice is the current state, $\theta_{ref} := \theta_{t-1}$, which we adopt to yield the following form for agent-relative interactivity,
$$\hat{\mathbb{I}}_{T}(\mathcal{A}\, |\, x_t, b_{0:t-1}) \approx \sum_{k=1}^{T-1} \left(\underbrace{\delta_{t+k}^{2}(\theta_{t-1})}_{\text{static}} -  \underbrace{\delta_{t+k}^{2}(\theta_{t+k-1})}_{\text{dynamic}}\right).$$

Lastly, we outline how a policy can be trained to maximize interactivity.
To obtain the future prediction errors, we assume access to a differentiable model that we can use to roll-out a sequence of observations and actions using the current policy.
Interactivity is estimated on the roll-out by computing the cumulative difference between the static and dynamic prediction errors.
The policy is then updated using a gradient-based optimizer with the objective of maximizing the estimated interactivity.\footnote{This optimization problem involves meta-gradients due to the dynamic prediction errors that depend on the value function's parameter update.}
Crucially, both the policy and value function must be continually updated to sustain interactivity: if the value function stops changing then interactivity is trivially zero, and if the policy stops changing then the value function can converge to a fixed point.

\vspace{-3mm}
\subsection{Maximizing Interactivity as a Continual Learning Problem}
\vspace{-1mm}
We now show that an interactivity-seeking reinforcement learning agent faces a continual learning problem.
Intuitively, this is because interactivity-seeking behaviour is directed towards experience which can be predicted by a dynamic value function that learns from experience, but which cannot be predicted by the current value function if it were to remain static.
Specifically, interactivity-seeking agents are suboptimal if they stop learning, such as when the parameters of their policy and value stop changing.
Similar to our previous result on embedded automata (Theorem \ref{thm:big_world}), we also show that the agent's maximum possible interactivity increases as the its capacity increases.

\begin{theorem}[Big World]
  \label{thm:continual_learning}
  An agent that seeks to maximize its agent-relative interactivity is (i) limited by its finite capacity and, (ii) suboptimal if it stops learning.
\end{theorem}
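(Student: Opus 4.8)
The plan is to prove the two claims separately, drawing on the prediction-error formulation of agent-relative interactivity and on the structural results already established for embedded automata. For claim (i), I would invoke Proposition~\ref{prop:implicit}: the agent is an embedded automaton whose internal state space $\Omega|_\Theta$ is finite, so both the predictor $v$ and the update rule $u$ range over a finite set of parameter configurations. Since the agent-relative interactivity $\hat{\mathbb{I}}_T$ is a difference of squared temporal-difference errors evaluated under these finitely-many predictors, its value over any horizon is bounded by a quantity determined by $|\Theta|$; this parallels the asymptotic capacity-dependence established in Theorem~\ref{thm:big_world} for the idealized interactivity $\mathbb{I}_T$. The upshot is that no amount of policy optimization can push interactivity past a ceiling fixed by the agent's capacity, while a larger $|\Theta|$ strictly raises this ceiling by admitting more expressive predictors and richer learned predictions.

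For claim (ii), I would make ``stops learning'' precise as the event in which the parameters of the policy and value function cease to change, and treat the two freezing modes separately. If the value function stops changing, then $\theta_{t+k-1} = \theta_{t-1}$ for every $k$, so the dynamic errors coincide termwise with the static errors, $\delta_{t+k}^2(\theta_{t+k-1}) = \delta_{t+k}^2(\theta_{t-1})$, and substituting into the interactivity expression gives
$$\hat{\mathbb{I}}_{T}(\mathcal{A}\mid x_t, b_{0:t-1}) = \sum_{k=1}^{T-1}\left(\delta_{t+k}^2(\theta_{t-1}) - \delta_{t+k}^2(\theta_{t-1})\right) = 0.$$
If instead the policy stops changing, then in the deterministic setting the induced behaviour becomes a fixed trajectory and the TD($0$) updates drive $\theta$ toward a fixed point at which the dynamic errors vanish; once $\theta$ stabilizes we are back in the first case, and interactivity again collapses to zero. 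In both modes, halting adaptation yields zero interactivity.

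To conclude suboptimality, I would argue that zero is strictly below the attainable maximum: because the environment is universal-local and therefore strictly larger in capacity than any embedded agent (Proposition~\ref{prop:implicit}), there always exists future experience that a learning predictor can track but that a static predictor cannot, giving strictly positive $\hat{\mathbb{I}}_T$. An interactivity-maximizing agent that froze its parameters would thus forgo this positive value, establishing that stopping learning is suboptimal.

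The hard part will be the policy-frozen sub-case, since it requires a genuine convergence claim: that TD($0$) under a fixed policy drives the dynamic errors to zero, so that the static and dynamic terms cancel. I would lean on standard fixed-point arguments for semi-gradient TD under a stationary evaluation target, while flagging the deterministic-setting assumption under which the expectation over trajectories is dropped. A secondary subtlety is the existence claim for strictly positive achievable interactivity; here the essential ingredient is the universal-local structure, which guarantees a source of learnable-yet-initially-unpredictable experience that a continually-updating agent can exploit.
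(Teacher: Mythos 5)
Your proposal is essentially sound and, for the reinforcement-learning half of part (ii), follows the same route as the paper: the paper likewise argues that freezing the value parameters makes the interactivity objective ``immediately collapse to $0$'' (your termwise cancellation $\delta_{t+k}^2(\theta_{t+k-1}) = \delta_{t+k}^2(\theta_{t-1})$ spells this out more explicitly than the paper does), and that freezing the policy makes the behaviour a fixed, predictable process under which the value function \emph{could} converge to a static predictor, lowering interactivity. Note the paper deliberately hedges with ``could converge'' and so avoids the TD($0$) fixed-point theorem you flag as the hard part; your version commits to a genuine convergence claim that the paper never needs. For part (i), the paper's argument is different in flavour from yours: rather than counting parameter configurations, it works through algorithmic complexity, showing that an interactivity-maximizing agent's behaviour has unconditional Kolmogorov complexity pinned to its capacity and conditional complexity at most a fraction $\alpha C(\mathcal{A})$ of it, yielding two-sided bounds $(1-\alpha)C(\mathcal{A}) - O(1) \leq \max_{\mathcal{A}}\hat{\mathbb{I}}_{T} \leq C(\mathcal{A}) + O(1)$. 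The paper also gives an additional automaton-level argument for part (ii) that you omit entirely: an automaton that never uses its update function $u$ has its effective capacity reduced (from $O(\mathrm{poly}(|A|))$ to $O(|\Omega|_{X}|)$), so stopping learning strictly shrinks the complexity of realizable future behaviour.

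The one step where your argument leans on the wrong ingredient is the final suboptimality claim. You derive ``strictly positive attainable interactivity'' from universal-locality --- the environment being bigger than the agent guarantees learnable-yet-unpredictable experience. That inference does not go through on its own, and it is particularly off-target in the agent-relative setting the theorem is actually about: there the evaluation is environment-free (the self-predicting agent generates its own experience), so the environment's excess capacity supplies nothing. What makes zero suboptimal is the \emph{lower} bound on achievable interactivity, which the paper obtains constructively (the enumeration-plus-stored-history construction behind Theorem~\ref{thm:big_world}, adapted in part (i) to give $(1-\alpha)C(\mathcal{A}) - O(1) > 0$). Your own part (i) only establishes a ceiling, not this floor, so your proof as written has no source for positivity. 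The fix is easy --- invoke the lower-bound half of Theorem~\ref{thm:big_world} (or the paper's part (i) adaptation of it) instead of universal-locality --- but as stated that link is a gap.
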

The desiderata of Theorem \ref{thm:continual_learning} were previously described as conditions for a big world simulator \citep{kumar24_need_big_world_simul}.
Thus, interactivity-seeking appears to be a general problem setting which captures the big world hypothesis: the best use of an agent's limited capacity is to continually adapt.


\vspace{-2mm}
\section{Evaluating Continual Adaptation With Agent-Relative Interactivity}
\vspace{-2mm}
\label{sec:behaviour}
\begin{figure}[t]
  \centering
\includegraphics[width=0.32\linewidth]{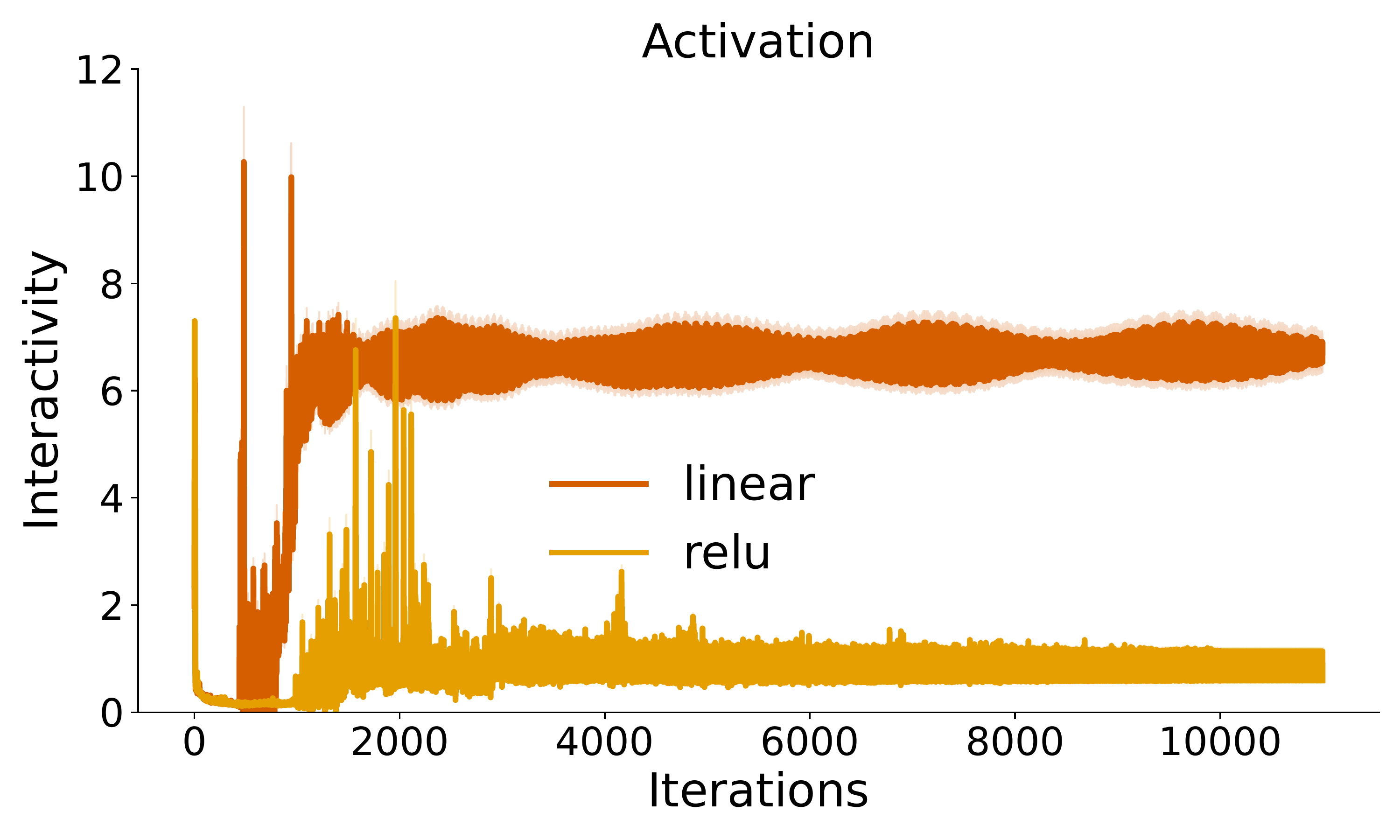}
\includegraphics[width=0.32\linewidth]{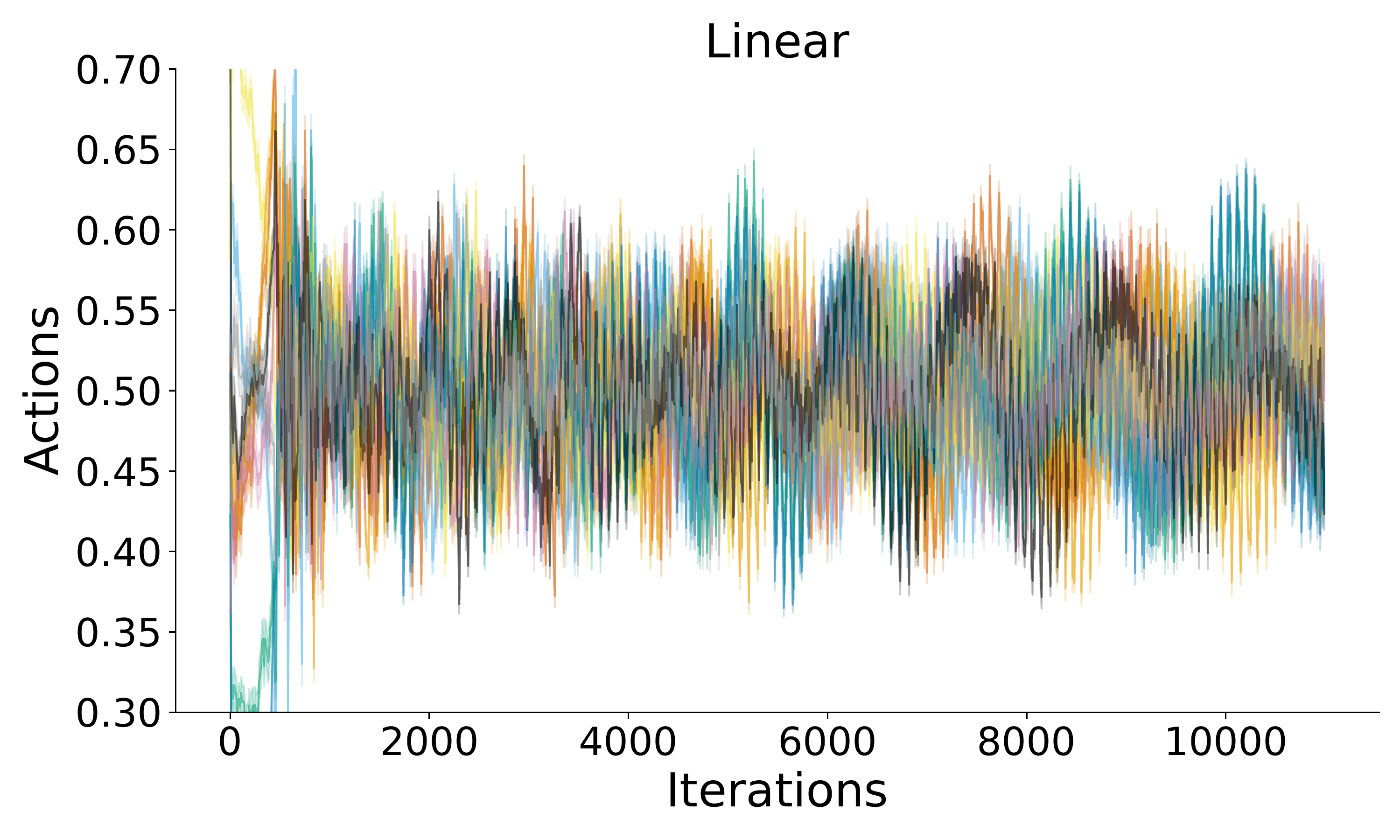}
\includegraphics[width=0.32\linewidth]{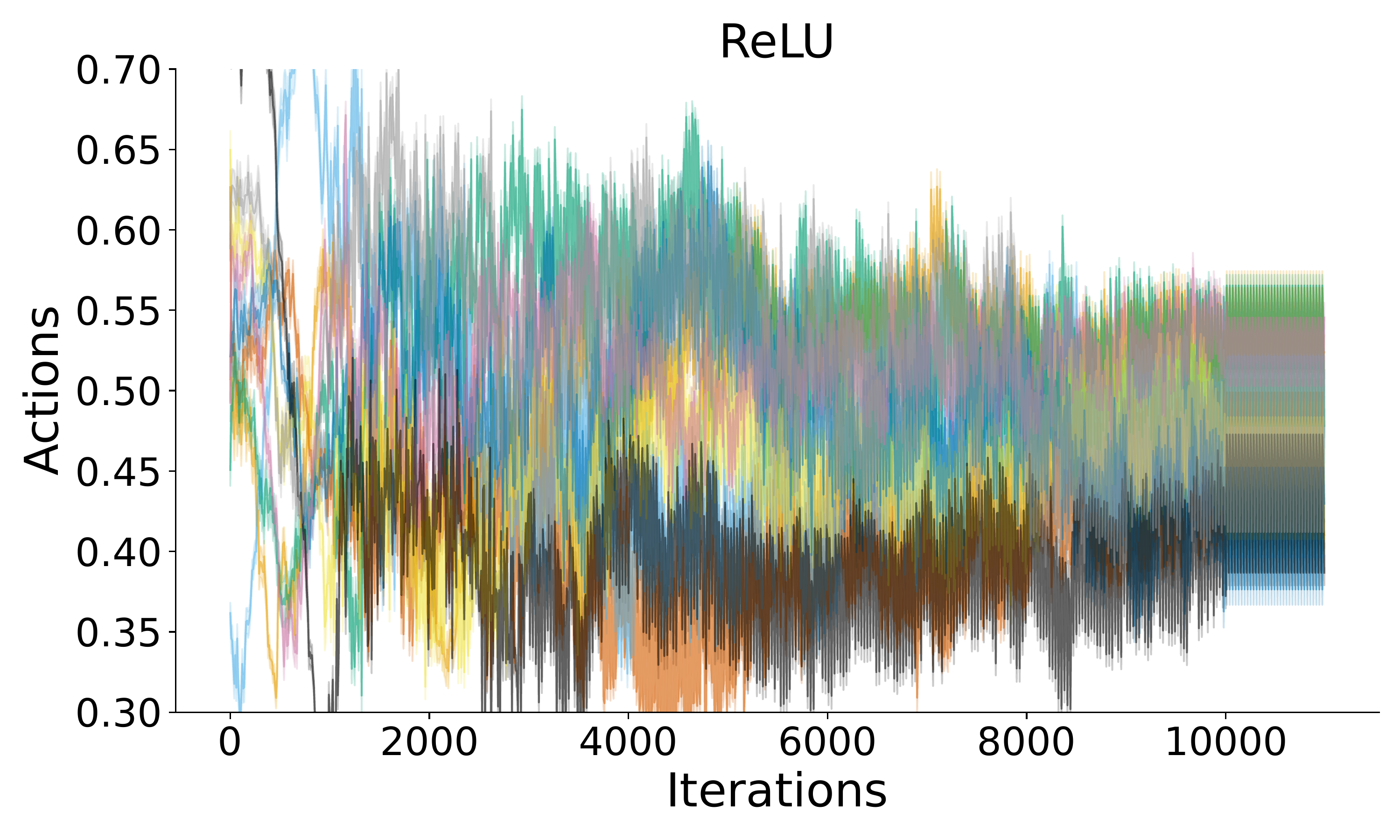}
\caption{
    \textbf{Deep \texttt{ReLU} networks fail to sustain performance on the interactivity evaluation task.}
    Interactivity-seeking simulates a big world and directly evaluates a learning algorithm's capability for continually adaptive behaviour. \textbf{Right:} The deep linear network sustains interactivity, whereas the deep \texttt{ReLU} collapses in performance.
    \textbf{Middle:} Each colour corresponds to one component of the action vector.  The deep linear policy learns to produce actions following a non-stationary wave, which can be locally predicted by a static linear function and globally predicted by a dynamic linear function. \textbf{Left:} The deep \texttt{ReLU} policy fails to produce actions with any predictable structure, thus highlighting the challenge of interactivity-seeking as a continual learning problem.
}
  \label{fig:behavioural}
\vspace{-4mm}
\end{figure}

We now use interactivity-seeking to construct a synthetic problem that evaluate a learning algorithm's capability for continually adaptive behaviour. 
In particular, we consider a setting in which the agent only observes its own actions (see the self-predicting agent in Figure \ref{fig:agents}, right).
Even though such an agent has full control over its experience stream, its interactivity is still implicitly constrained by its capacity.
That is, the interactivity objective depends on the parameters and learning algorithm of the value function, which the agent does not directly observe.
An advantage of this evaluation approach is that it is environment-free, allowing direct evaluation of an algorithm outside of an environment, or any collected data.
Instead, algorithms are directly evaluated by continually learning from their own online experience in a manner similar to self-play.

We instantiate the reinforcement learning agent outlined in Section \ref{sec:approx} with a linear parameterization of the value function, ${v(b_t; \mathbf{W}_t) := \mathbf{W}_t b_t \approx \sum_{k=0}^\infty \gamma^{k}b_{t+k+1}}$.
Linearity provides stability for learning online with TD($0$), compared to temporal difference learning with deep nonlinear networks.
For the policy parameterization, we consider a deep network architecture (using either linear or \texttt{ReLU} activations), where we normalize the output to ensure that the output has bounded range,
${b_{t+1} := \text{RMSNorm}\left(\pi(b_t; \theta_t)\right)}$.
This policy is optimized using the model-based approach described in Section \ref{sec:approx}.
Optimizing the agent-relative interactivity is a bi-level optimization problem as the dynamic prediction errors depend on the value function's learning process.
Thus, our model-based approach to maximizing interactivity is similar to
the cross-prop algorithm \citep{veeriah17_cross}, which is an online version of 
model-agnostic meta-learning \citep{finn17_model}.
For both the policy and the value function, we found RMSProp \citep{hinton12_neural_networ_machin_learn} to balance performance and stability better than either Adam \citep{kingma14_adam}, or vanilla gradient descent.

Our results demonstrate that a deep nonlinear policy is unable to sustain interactivity (see Figure \ref{fig:behavioural}, right).
That is, the deep nonlinear policy is unable to plan an action sequence for which
the dynamic value function has low prediction error,
but for which the current static value function has high prediction error.
However, we find that a deep linear policy is able to sustain interactivity.
This finding suggests that interactivity-seeking agents produce non-stationarity that can lead to apparent loss of plasticity, which linear methods have been shown to avoid \citep{lewandowski25_plast_learn_deep_fourier_featur,dohare24_loss}.  
Observing the actions chosen by each policy, we found that the deep linear policy learned to produce actions with predictable structure, resembling a non-stationary wave (see Figure \ref{fig:behavioural}, middle).
These actions can be locally predicted by a linear function, but global prediction requires a dynamic linear function.
In contrast, the deep nonlinear policy learned failed to produce actions with any predictable structure (see Figure \ref{fig:behavioural}, left).
Furthermore, in Figure \ref{fig:depth_width}, we found that deep linear networks are also capable of increasing their interactivity with more capacity, in the form of deeper or wider networks.
These findings demonstrate that interactivity-seeking simulates the challenges of a big world in which
any policy is limited by its finite capacity, and suboptimal if it stops learning.
\begin{figure}
  \centering
  \vspace{-4mm}
\includegraphics[width=0.49\linewidth]{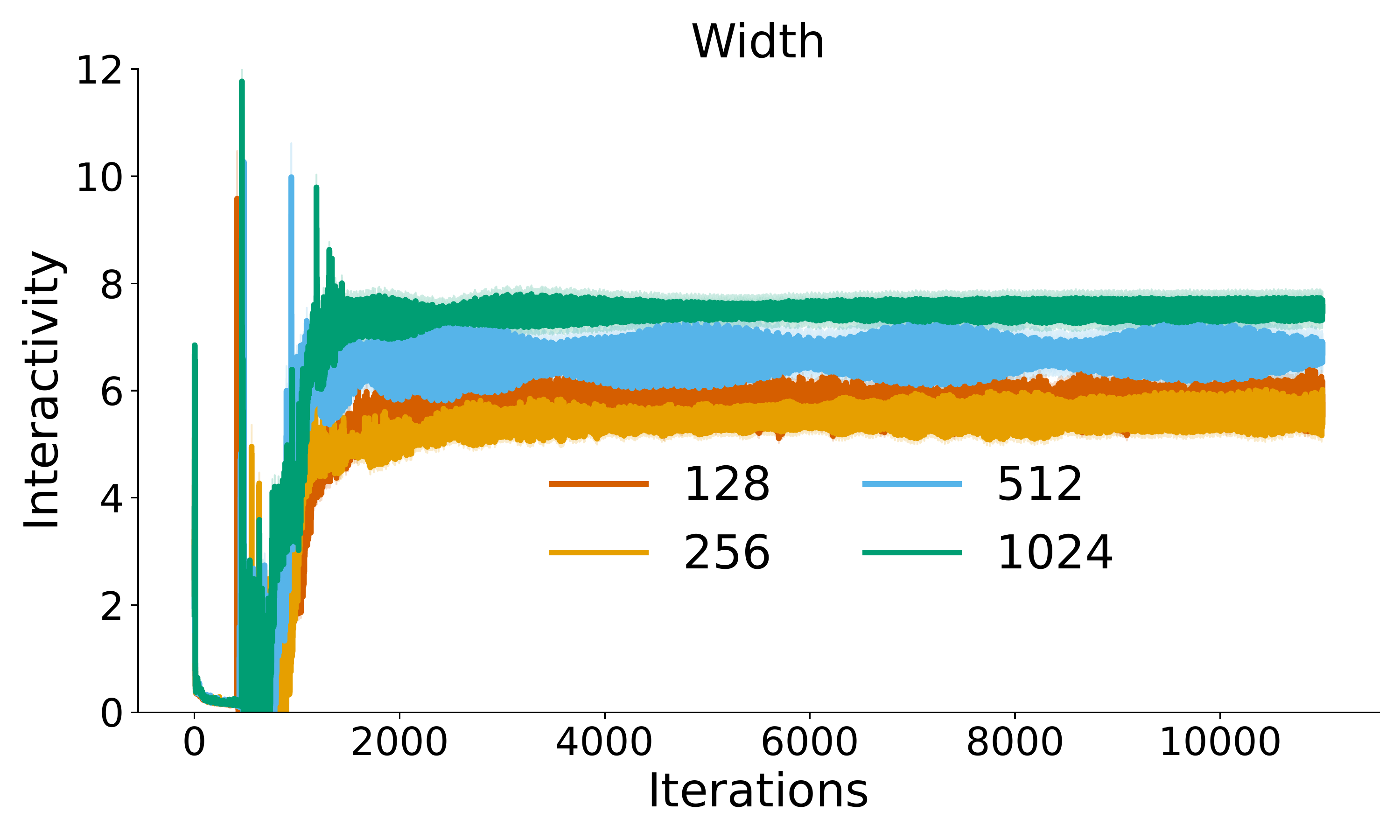}
\includegraphics[width=0.49\linewidth]{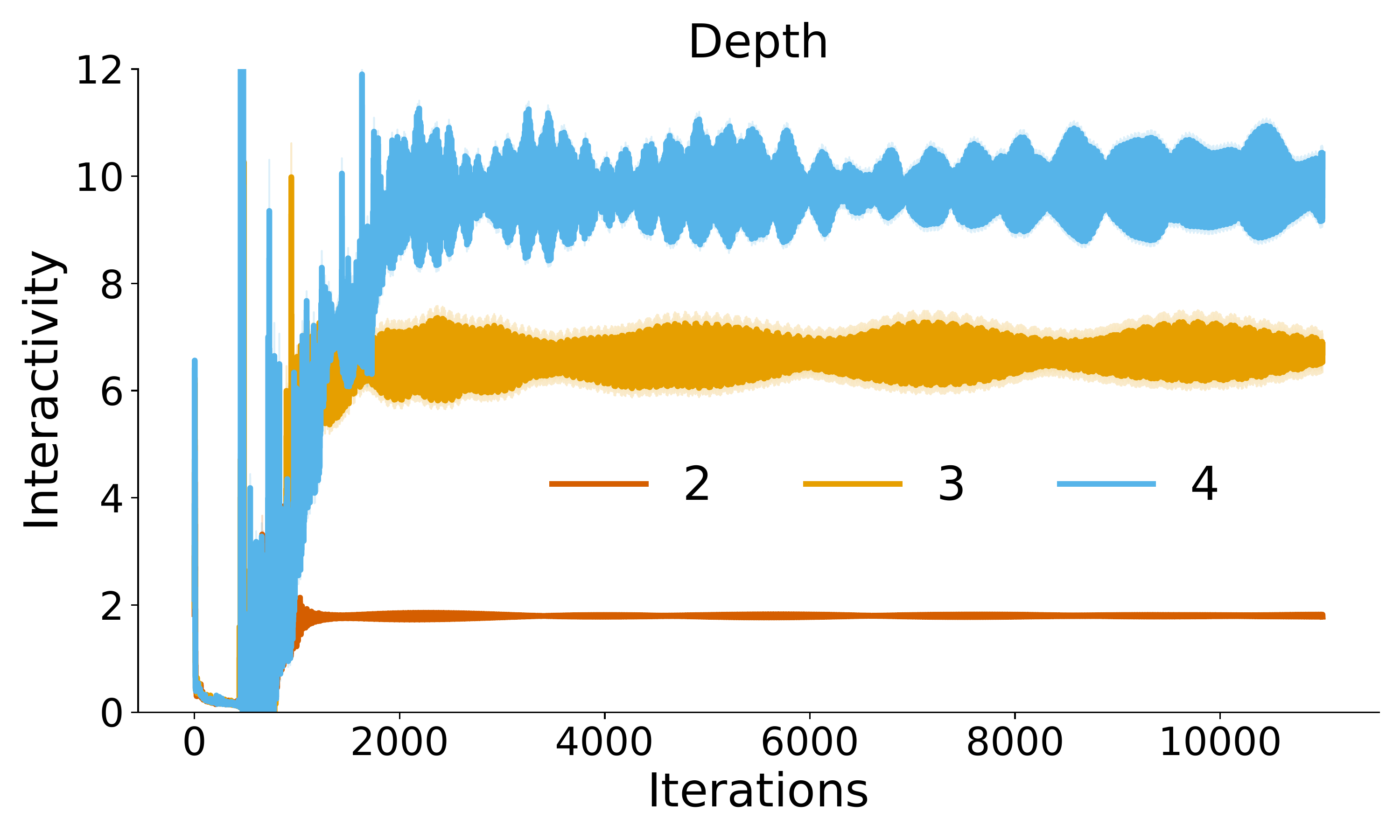}\\
\caption{
  \textbf{Deep linear networks are capable of sustaining higher interactivity with more computational resources.} By increasing the width and depth of the deep linear network, we increase the network's capacity for continual adaptation, which allows it more quickly change its linear function approximator. \textbf{Left:} Increasing width marginally increases the sustained interactivity. \textbf{Right:} Increasing depth results in a large increase in interactivity, as well as more oscillatory behaviour.
  }
  \label{fig:depth_width}
\end{figure}


\vspace{-3mm}
\section{Discussion}
\vspace{-2mm}

In this paper, we introduced a computationally-embedded perspective on the big world hypothesis, where we 
(i) characterize the implicit constraint faced by an embedded agent,
(ii) propose interactivity as a computational measure of adaptability, and
(iii) develop a reinforcement learning algorithm for maximizing interactivity.
We show that interactivity-seeking leads to the common desideratum of the continual learning problem in which any agent that stops learning is suboptimal.

Our work departs from dogmas common in reinforcement learning \citep{abel24_three_dogmas_reinf_learn} by formalizing the agent as embedded in the environment, and by proposing interactivity which is defined relative to an agent.
Interactivity-seeking agents are implicitly constrained because the interactivity objective depends on components, such as parameters, that are not directly observable but evolve through learning.
As we demonstrated in the environment-free evaluation task, this perspective has a critical consequence: an interactivity-seeking agent must continually adapt rather than converging to a fixed point.
Unlike continual learning problems which often involve manually designed non-stationarity, an interactivity-seeking agent autonomously produces the non-stationarity through its learning algorithm; the complexity of the task scales with the complexity of the learning algorithm.
This evaluation task thus provides a unique challenge, incentivizing the design of continual learning algorithms that achieve stable and fast adaptation, such as through alternative sequence architectures, learning algorithms designed for continual learning \citep{lewandowski25_learn_contin_spect_regul,dohare24_loss}, or better procedures for selecting hyperparameters \citep{mesbahi25_posit}.

Beyond serving as an evaluation task for continual learning, interactivity can also serve as an auxiliary objective for agents in conventional reinforcement learning environments.
Specifically, as an intrinsic reward, interactivity would incentivize directed exploration similar to curiosity-driven methods.
However, estimating interactivity in an external environment would involve learning a model of the environment to calculate meta-gradients, or a value function that directly approximates interactivity.
\looseness=-1

This practical potential for directed exploration suggests a refinement of the interactivity thesis. We speculate that if an agent can sustain a particular level of interactivity, then it possesses the capacity to learn any behaviour with equal or lower interactivity---including those that maximize the cumulative sum of expected discounted reward.
That is, learning algorithms that sustain high interactivity indicate a capacity needed to learn other goal-directed behaviours.

\vspace{-3mm}



\subsubsection*{Acknowledgments}
We would like to thank Saurabh Kumar and Hong Jun Jeon for helpful discussion during earlier stages of this work, David Abel and Anna Harutyunyan for their encouraging discussion regarding dogmas in reinforcement learning, and the anonymous reviewers for their feedback.
The research is supported in part by the Natural Sciences and Engineering Research Council of Canada (NSERC), the Canada CIFAR AI Chair Program, and the Digital Research Alliance of Canada.
\looseness=-1

\bibliography{references}
\bibliographystyle{apalike}

\newpage
\appendix

\section{Proofs}
\label{sec:proofs}


\begin{proof}[Proof of Proposition \ref{prop:markov_env}]
Let $M = (Q, \Sigma', \Gamma, \delta, q_0, \square', F)$ be a Turing machine where:
\begin{itemize}
  \item  $Q$ is the finite set of states
  \item $\Sigma'$ is the input alphabet 
  \item $\Gamma$ is the tape alphabet with $\Sigma' \subset \Gamma$
  \item  $\delta: Q \times \Gamma \to Q \times \Gamma \times \{L,R\}$ is the transition function
  \item  $q_0 \in Q$ is the initial state
  \item  $\square' \in \Gamma$ is the blank symbol
  \item  $F \subseteq Q$ is the set of final states
\end{itemize}

Our proof constructs an algorithmic Markov process, $\mathcal{E} = (\Omega, \Xi, \mathbb{T})$.
Specifically, it involves showing how the Turing machine can be represented on a state-space $\Omega$, with its transition function represented in the Markov transition function, $\mathbb{T}$.

 \textbf{Constructing the state-space ($\Omega$)}

 Let $\Xi = \mathbb{Z}$ denote the integers for tape positions, $\Sigma = ((Q \cup \{\square''\}) \times \Gamma)  \cup \{\square\}$ where $\square$ is the Markov blank symbol.
 Then each $\omega \in \Omega$ represents a complete Turing machine configuration by encoding the tape contents at each position, the current state and head position.

Specifically, for a TM configuration with state $q$, head at position $h$, and tape contents $...a_{-1}a_0a_1...$, we define:
$$\omega(\xi) = \begin{cases}
(q, a_h) & \text{if } \xi = h \text{ (head position)} \\
(\square'', a_\xi) & \text{if } \xi \neq h \text{ and } a_\xi \neq \square' \\
\square & \text{otherwise}
\end{cases}$$

 \textbf{Constructing the transition function ($\mathbb{T}$)}

The Markov transition $\mathbb{T}(\omega)$ simulates one step of the Turing machine. It first finds the head position $h$ where $\omega(h) = (q,a)$ for some $q \in Q, a \in \Gamma$. It then
applies the Turing machine transition: $\delta(q,a) = (q', a', d)$ where $d \in \{L,R\}$. Lastly, 
we construct $\omega'$ by:
\begin{itemize}
\item Setting $\omega'(h) = (\square'', a')$ (write new symbol)
  \item Setting $\omega'(h + \text{offset}(d)) = (q', b)$ where

        $$b = \begin{cases}
c & \text{if } \omega(h + \text{offset}(d)) = (\square'', c) \text{ for some } c \in \Gamma \\
\square' & \text{if } \omega(h + \text{offset}(d)) = \square
        \end{cases}$$
        and offset$(L) = -1$, offset$(R) = 1$
\item Leaving all other positions unchanged: $\omega'(\xi) = \omega(\xi)$ for all $\xi \notin \{h, h + \text{offset}(d)\}$
\end{itemize}

At initialization, we have that the initial tape contents is finite and thus $|\omega'| < \infty$.
At each step, the state remains finite, $|\omega'| < \infty$, because each step changes at most 2 positions.
Note also that $\mathbb{T}$ is computable in $O(\text{poly}(|\omega|))$; finding the head and updating positions requires linear scan and constant updates.
\end{proof}


\begin{proof}[Proof of Proposition \ref{prop:agent}]
We prove each part in sequence.

\textbf{Part 1:} The automaton is equivalent to a $k$-horizon boundaried Markov process.

Given that $b^k(\Theta) = X$, the $k$-step transition dynamics on the internal state space $\Omega|_\Theta$ depends only on the current internal state and the current observation.

We define two $k$-horizon boundaried Markov process.
First, for the internal statel, we have: $\mathcal{E}|_\Theta = (\Omega|_\Theta, \Omega|_{X}, \mathbb{T}|_\Theta^k)$ where $\mathbb{T}|_\Theta^k(\theta, x)) := u(x; \theta)$.
Next, for the policy, we have: $\mathcal{E}|_Y = (\Omega|_Y, \Omega|_{X \cup \Theta}, \mathbb{T}|_Y^k)$ where $\mathbb{T}|_Y^k(y, \theta, x) := \pi(x; \theta)$.

That is, both the policy, $\pi$, and the internal state update function, $u$, is simulated by the $k$-step composition of the environment's transition function.
By the definition of uniform locality and the condition $b^k(\Theta) = X $, the $k$-step transition on $\Omega|_\Theta$ is fully determined by the current internal state $\theta \in \Omega|_\Theta$ and the boundary $x \in \Omega|_{X}$.
This establishes the equivalence.

\textbf{Part 2:} The automaton's environment is a partially observable Markov decision process.

For the automaton specifically, the environment consists of:
\begin{itemize}
\item \textbf{State space:} $\Omega$
\item \textbf{Action space:} The output space $\Omega|_Y$ 
\item \textbf{Observation space:} The input space $\Omega|_X$
\item \textbf{Transition function:} Given the current environment state, the automaton's action $y \in \Omega|_Y$, the next environment state follows from $\mathbb{T}$
\item \textbf{Observation function:} The automaton observes $x \in \Omega|_X$ from the current environment state
\end{itemize}

Since the automaton only observes $\Omega|_X$ and not the full environment state $\Omega$, this constitutes partial observability. The Markov property holds for the underlying environment state transitions via $\mathbb{T}$.

\textbf{Part 3:} The automaton's interaction is equivalent to a stateful policy acting on the environment.

The automaton maintains an internal state $\theta \in \Omega|_\Theta$ and produces actions via the output function $\pi: \Omega|_X \times \Omega|_\Theta  \to \Omega|_Y$. This defines a stateful policy,
$\pi(x; \theta)$, where the internal state $\theta$ is updated according to:
$$\theta_{t+1} = u(x_t; \theta_{t-1}).$$

This is precisely the definition of a stateful policy in partially observable environments, where the policy maintains internal memory (the substate $\theta$) and conditions its actions on both observations and this internal state.

\end{proof}


\begin{proof}[Proof of Proposition \ref{prop:implicit}]

  An embedded automaton has a finite number of states, which is upper bounded by the size of its internal state space, $|\Theta|$.
  This upper bounds the number of unique configurations that the internal state space can take, which thus upper bounds the capacity of the automaton.

  Thus, the automaton is constrained to have finite capacity upper bounded by $|\Theta|$.

  We further show that an embedded automaton is only capable of a limited form of computation relative to the partially observable Markov decision process. We then outline additional constraints that result from the automaton being embedded.

  An embedded automaton is equivalent to a finite-state machine.
  This means that the automaton is only capable of recognizing a regular language.
  The partially observable Markov decision process that it faces, however, is a function of an unbounded substate-space of the computationally universal environment.
  This means that the environment can, in general, generate a recursively-enumerable language.
  The embedded automaton is thus implicitly computationally constrained, because
  of the separation between finite-state automata and the turing-complete environment in the Chomsky hierarchy \citep{chomsky59}.

  Moreover, for a sufficiently long behaviour sequence, the automaton must eventually return to a previous state (\textit{i.e.}, $T >> |\Theta|$).
  Any behavioural sequence with a period longer than $|\Theta|$ cannot be represented by the automaton.

  Thus, an embedded automaton simulated in a universal-local environment is implicitly constrained: there exist input-output behaviours that it cannot realize.

  There are two additional ways in which an embedded automaton is implicitly constrained:

  \begin{enumerate}
    \item \textbf{Minimum size}: The size of an embedded automaton, including the size of its input and output spaces, cannot be arbitrarily small, and thus there exists a minimum size. This implies that the automaton cannot read and write to arbitrarily small parts of the environment, constraining its observation and action spaces.

    \item \textbf{Simulation time}: Simulating an embedded automaton in a universal-local environment may also incur a simulation overhead.
          This constrains the automaton by the fact that several transitions in the environment may be necessary to simulate a single transition for the automaton.
  \end{enumerate}

  While the embedded automaton is computationally constrained relative to its environment, these two additional constraints limit the information made available to the automaton about the environment.
  Specifically,
  an automaton generally cannot observe, process and output information at the same granularity, or at the same timescale, as the environment because of constraints on its size and its simulation time.
\end{proof}


\begin{proof}[Proof of Theorem \ref{thm:big_world}]

  We denote the capacity of the automaton as $C(\mathcal{A})$, and denote the maximum achievable $T$-horizon interactivity for a given automaton as,
  $$\max_{\mathcal{A}}\mathbb{I}_{T}(\mathcal{A}|x_t, b_{0:t-1}).$$

  Intuition: interactivity is determined by the dependence between the past and future behaviour of the automaton. This behaviour is determinde by a substate-space that grows with the horizon ($T$).
  Specifically, the future behaviour of an embedded automaton is determined by (i) the universal-local environment's transition function, (ii) the embedded automaton's initial internal state, and (iii) a substate-space of the environment that grows with the horizon of behaviour considered.
  If an embedded automaton's past behaviour were predictive of its future behaviour of a given horizon, then it would also imply that its past behaviour is predictive of a substate-space growing with that horizon.
  For a large enough horizon, the size of this substate-space will eventually be larger than the capacity of the automaton.
  An embedded automaton with a given capacity cannot maximize interactivity beyond a given horizon, meaning that it actively faces an implicit capacity constraint.

  Formally: an automaton with sufficiently high interactivity will produce a behaviour with high unconditional Kolmogorov complexity, $\mathbb{K}(b_{t:t+T-1}) + O(1) > C(\mathcal{A})$, where $O(1)$ is a constant independent of the automaton. 
  Because the behaviour is generated by the automaton, we know that the Kolmogorov complexity is also upper bounded by the capacity, $C(\mathcal{A}) + O(1) > \mathbb{K}(b_{t:t+T-1})$.

  Thus, we can upper bounded interactivity in terms of the unconditional complexity,
  \begin{align*}
    \mathbb{I}_{T}(\mathcal{A}|x_t, b_{0:t-1}) &= \left(\mathbb{K}(b_{t:t+T-1}) - \mathbb{K}(b_{t:t+T-1} \, | \, b_{0:t-1} )\right)\\
    &\leq \mathbb{K}(b_{t:t+T-1})
  \end{align*}
  Which uses the fact that the conditional algorithmic complexity is positive, $\mathbb{K}(b_{t:t+T-1} \, | \, b_{0:t-1} ) > 0$.

  Next we use the fact that the Kolmogorov complexity of a sequence produced by an automaton is upper bounded by its capacity, which implies that,

  $$ \max_{\mathcal{A}}\mathbb{I}_{T}(\mathcal{A}|x_t, b_{0:t-1}) \leq \mathbb{K}(b_{t:t+T-1}) \leq C(\mathcal{A}) + O(1)$$

  \textbf{Lower Bounding Interactivity By Capacity}

  For the lower bound, we provide a constructive proof in which we show how an automaton can scale a lower bound on its interactivity proportional to its capacity.

  In particular, the automaton will use a brute-force method which simulates a much smaller automaton along with a stored history.
  That is, a subroutine will simulate an automaton with capacity $\alpha C(\mathcal{A})$, where $\alpha < 1$.
  The automaton will use its remaining capacity to store on the order of $(1-\alpha) C(\mathcal{A})$ of the previous input-output pairs.
  It then runs an enumeration strategy as a subroutine, which simulates the smaller automaton with capacity $\alpha C(\mathcal{A})$.
  Specifically, the subroutine enumerates over all possible $T$-horizon futures and selects the next output that maximizes the resulting $T$ horizon unconditional complexity and minimizes the $T$-horizon conditional complexity with respect to the stored history of length $(1-\alpha) C(\mathcal{A})$. While this is computationally expensive, we can choose $\alpha$ to be small enough that this is possible. The interactivity achieved in this setting is thus,$\alpha C(\mathcal{A}) - O(!)$

  Putting these together, we have the lower and upper bounds of:

  $$\alpha C(\mathcal{A}) - O(1) <  \max_{\mathcal{A}}\mathbb{I}_{T}(\mathcal{A}|x_t, b_{0:t-1}) \leq \mathbb{K}(b_{t:t+T-1}) \leq C(\mathcal{A}) + O(1)$$

\end{proof}


\begin{proof}[Proof of Theorem \ref{thm:continual_learning}]

  We provide a proof for each of the two desiderata

\emph{(i)}
The first property follows from an argument that is similar to Theorem \ref{thm:big_world}, but adapted to a learning agent, $\mathcal{A}$, with some bounded capacity $C(\mathcal{A})$. We are interested in what interactivity the best such agent can achieve, $\max_{\mathcal{A}}\mathbb{I}_{T}(\mathcal{A}|x_t, b_{0:t-1})$.

A bounded agent that maximizes its interactivity will have a non-zero unconditional agent-relativized complexity, $\mathbb{K}_{\mathcal{A}}(b_{t:t+T-1} | \epsilon) > 0$ (otherwise, its interactivity would be zero).
This implies that the unconditional Kolmogorov complexity of its behaviour is on the order of the the capacity of the agent, $\mathbb{K}(b_{t:H}) \geq C(\mathcal{A}) - O(1)$, where $O(1)$ is a constant independent of the agent.
Because the behaviour is generated by the automaton, we know that the Kolmogorov complexity is also upper bounded in terms of the capacity, $C(\mathcal{A}) + O(1) \geq \mathbb{K}(b_{t:H})$.

Such an agent will also have low conditional agent-relativized complexity (otherwise, its interactivity would be low).
An optimal learning agent that minimizes the agent-relativized complexity,
$\mathbb{K}_\mathcal{A}(b_{t:t+T-1}| b_{0:t-1}) = 0$, has conditional Kolmogorov complexity is strictly less than the capacity of the agent, $\mathbb{K}(b_{t:t+T-1} | b_{0:t-1}) < C(\mathcal{A})$.
  In fact, we have, for $\alpha < 1$, that
  $\mathbb{K}(b_{t:t+T-1} | b_{0:t-1}) \leq \alpha C(\mathcal{A})$.
  This is because the agent can only use a fraction of its capacity on predicting its future behaviour (in addition to making predictions, an agent selects actions, and updates its substate).

Taken together, we have that the performance of an interactivity-seeking agent interactivity is upper and lower bounded by capacity,
$$(1-\alpha)C(\mathcal{A}) - O(1) \leq \max_{\mathcal{A}}\mathbb{I}_{T}(\mathcal{A}|x_t, b_{0:t-1}) \leq C(\mathcal{A}) + O(1).$$
An agent with a given capacity cannot maximize its interactivity without increasing its capacity.
Thus, a bounded agent that seeks to maximize its interactivity through learning is limited by its finite capacity constraint.

\emph{(ii)}
For the second property, we demonstrate the necessity of continual adaptation for maximizing interactivity, by considering the role of the embedded agent's transition function.

\textbf{Continual Adaptation in Automata}

First we consider the
finite-state automaton, $\mathcal{A}$, and how its substate transition function, $u$, encodes its learning.
An automaton agent that has stopped learning is thus equivalent to one that stops updating its internal state.
In this case, the automaton's internal state remains constant $z_{t'} = z$ for all $t' > t$.
A finite-state automaton has a capacity on the order of $C(\mathcal{A}) \in O(poly(|A|))$.
But, a finite-state automaton that does not update its internal state, denoted by $\mathcal{A}^{-}$, has a reduction in its capacity.
In particular, the capacity is reduced to $C(\mathcal{A}^{-}) = O(|\Omega|_{X}|)$, because the terms needed to encode the transition function are no longer needed for an automaton that does not use the transition function.
Using the upper bounds on interactivity from the Theorem 1, we conclude that an agent that stops learning reduces its future output complexity from
$O(|I_{A}||A| \log |A|)$
to
$O(|I_{A}|)$.
Thus, it is suboptimal to stop learning.

\textbf{Continual Adaptation in Reinforcement Learning Agents}

\textbf{Value parameters}: If the parameters of the value function stop being updated, the interactivity objective immediately collapses to $0$.

\textbf{Policy parameters}: Suppose the policy's parameters stop being updated, meaning that the policy becomes fixed.
Then the sequence of actions taken by the fixed policy becomes a Markov process, which is predictable. 
Under the Markovian dynamics induced by a fixed policy, the value function could converge to an optimal static prediction of the future behaviour. This would lower both the unconditional Kolmogorov complexity and interactivity.
Thus, an agent maximizing interactivity is suboptimal if it stops updating its policy parameters.

\end{proof}


\section{Experimental Details for Behavioural Self-Prediction}
\label{sec:exps}

The code for the experiments can be found at: \href{https://github.com/AlexLewandowski/bigger-world-interactivity}{https://github.com/AlexLewandowski/bigger-world-interactivity}

The problem that we consider involves a value function predicting a policy's actions, where the policy is trained to maximize interactivity.
Specifically, the policy is adversarial to the static value function but cooperative with the dynamic value function.
We consider a policy that maps its previous action to a new action.
In this case, there is no external environment providing observations.

The value function, which we restrict to be linear, is tasked with predicting the future behaviour of the policy iteratively and online.
At the first timestep, we randomly initialize the parameters of the policy $\theta_{0} \sim p(\theta)$ and of the value function, $\mathbf{W}_{0} \sim p(\mathbf{W})$, using standard distributions for neural network initialization.
We then randomly sample the initial action, $b_{0} \sim N(0, 1/d)$, where $d=1000$ denotes the dimensionality of the action.
The function approximator, $\pi_{\theta}$ is then trained to maximize its interactivity whereas the value function $v_{\mathbf{W}}$ is trained using TD($0$).
Specifically, the following steps are repeated at each timestep $t$,
\begin{itemize}
  \item A trajectory of $T$ actions is produced by the policy, $\pi_{\theta_t}$.
  \item A copy of the current value function is frozen, and we record the temporal difference errors incurred along this trajectory: $\{\delta_{t,\gamma}(z_t, \theta_t), \dotso, \delta_{t+T,\gamma}(z_{t}, \theta_t)\}$. The sum of squared temporal difference errors provides a measure of the unconditional algorithmic complexity, $\mathbb{K}_{\mathcal{A}}(b_{t:T}|\epsilon) = \sum_{k=1}^T\delta_{t+k,\gamma}^2(z_{t}, \theta_t)$.

  \item A copy of the current value function is updated dynamically along the trajectory, and we record the temporal difference errors incurred along this trajectory: $\{\delta_{t,\gamma}(z_t, \theta_t), \dotso, \delta_{t+T,\gamma}(z_{t+T},\theta)\}$. Where each $z_{t+k}$ corresponds to the updated parameters of the value function. The sum of squared temporal difference errors provides a measure of the conditional algorithmic complexity, $\mathbb{K}_{\mathcal{A}}(b_{t:t+T-1}|b_{0:t-1}) = \sum_{k=1}^T\delta_{t+k,\gamma}^2(z_{t+k}, \theta_t)$.

  \item We update the policy using a single step from a gradient-based optimizer on the loss,
$$J(\theta) = \sum_{k=1}^T \delta_{t+k,\gamma}^2(z_{t}, \theta) - \delta_{t+k,\gamma}^2(z_{t+k}, \theta).$$

\item We take an action using the updated policy, $b_{t+1} = \pi(b_t; \theta_{t+1})$.
\item We now update the value function with a single step of TD($0$) using the temporal difference error, $\delta_t = b_{t+1} + \gamma v(b_{t+1}, \mathbf{W}_t) - v(b_{t}, \mathbf{W}_t)$.
\end{itemize}

\subsection{Interpreting Interactivity Maximization As A Continual Learning Benchmark}
Our theoretical and experimental results show that maximizing interactivity requires both fast adaptation to increase complexity with higher prediction errors and stability to sustain adaptation over time.
That is, maximizing interactivity involves the canonical plasticity-stability trade-off of continual learning \citep{grossberg82_how,parisi19_contin}.
Our empirical results demonstrate that deep nonlinear baselines fail at striking this balance, whereas deep linear networks appear to naturally achieve this balance.
This suggests that this synthetic benchmark isolates the key challenge in continual learning, while also not requiring outside data or environments.
This is significant because few environments are designed specifically to evaluate continual adaptation.
Thus, it is suboptimal to stop learning in this setting, regardless of the capacity of the algorithm or function approximator.

\subsection{Limitations of Experiments}

Our experiments used relatively shallow networks, with a maximum depth of $D=4$.
However, with the meta-gradient calculation over a finite horizon of $T=10$, the effective depth of the networks during auto-differentiation is $T\cdot D = 40$.
Meta-gradient methods for deep networks can exhibit more pathological learning dynamics due to increased curvature, leading to instability that could partially explain the discrepancy between linear and nonlinear networks.
Understanding how to control curvature using only first-order methods is key for effective meta-gradient descent in this setting.

The meta-gradient method poses several limitations in scaling.
Ideally, we would prefer to scale the horizon and the capacity of the function approximator.
However, because meta-gradients involve a second-order terms, involving the hessian, and because the horizon is multiplicative with the depth of the network, we have a computational complexity on the order $O(HD^2)$, where $D$ is the depth of the network.
Scaling both the horizon and the capacity results in a effective cubic scaling.
A more promising direction involves bootstrapping meta-gradients \citep{flennerhag22_boots_meta_learn}, and other first-order approximation \citep{nichol18_first_order_meta_learn_algor}.

Experimental evaluation in this setting also requires special consideration.
Holding the agent fixed for evaluation, as is commonly done in machine learning, is not appropriate given that interactivity is defined as an online objective.
In addition, standard approaches to hyperparameter tuning may not be feasible for evaluating the long-term performance of a continual learning agent \citep{mesbahi25_posit}.
Overcoming these obstacles to more fairly assess dependence on hyperparameters would require re-evaluation of several components of empirical practice in machine learning.


\section{Additional Background and Related Work}

\subsection{Algorithmic complexity}

The Kolmogorov complexity~\citep{Kolmogorov:65, Solomonoff:64, Chaitin:66} of an object (encoded as a binary string) is the length of the shortest program that computes it and halts.
Unlike traditional information theory, it measures the complexity of an individual object without depending on a stochastic source or ensemble.

The Kolmogorov complexity of a string depends on the choice of a universal Turing machine.
However, since any universal Turing machine can simulate another (e.g., via a compiler), the choice of the machine affects the Kolmogorov complexity by, at most, an additive constant independent of the specific string~\citep{li19_introd_kolmog_compl_its_applic_edition}.

Kolmogorov complexity is closely tied to compression, where the shortest description represents the most efficient compression for the given universal Turing machine.
Although Kolmogorov complexity is uncomputable, it is possible to compute improving upper bounds by searching over all possible programs in parallel and tracking the shortest candidate that generates the target string~\citep{li19_introd_kolmog_compl_its_applic_edition}.

\subsection{AIXI}

AIXI defines a general Bayes-optimal reinforcement learning agent in an unknown computable environment~\citep{Hutter:05uaibook}.
In this framework, the environment is represented by a Turing machine with unidirectional input and output tapes, and bidirectional working/internal tapes.
The agent's actions are received by the environment on its input tape, based on which it can write a computable history-based reward and observation on its output tape.

The AIXI agent acts in a Bayes-optimal manner by planning based on a posterior estimate over all computable environments, using Solomonoff’s universal prior as a starting point~\citep{Solomonoff:64}.
This prior assigns higher probability to `simpler' environments--those with lower Kolmogorov complexity.
However, both Solomonoff’s prior and AIXI are incomputable, making the development of practical approximations within this framework a key area of interest~\citep{VenessNHUS11}.

\subsection{Connections to intrinsic motivation and the free energy principle}

Previous work has explored several intrinsic drives that can guide agent behaviour without the need for explicit external rewards \citep{Schmidhuber:10ieeetamd, barto2013intrinsic}.
Many approaches to intrinsic motivation are developed within the framework of traditional RL, where the agents are not constrained relative to the environment.
As a result, these approaches may not be well-suited to a big world.
Nevertheless, interactivity shares connections to ideas such as mutual information maximization in intrinsic motivation.

The information gain of a dynamics model can serve as an intrinsic or auxiliary reward, promoting curious exploration \citep{Storck:95, houthooft2016vime}
Unlike curiosity driven by information gain, the goal of interactivity is not to learn an accurate model of the world.

Another related concept is Empowerment~\citep{klyubin2005empowerment}, where an agent seeks to maximize its control over its environment.
Empowerment-seeking agents aim to maximize the mutual information between their actions and future states.
Such agents avoid states where their actions have low influence and prefer states that allow for a wide range of controllable outcomes.
This objective can also be used to learn a set of behaviours (or options) that lead to different final states~\citep{mohamed2015variational, gregor2016variational}.
As discussed earlier, interactivity-maximizing agents produce complex yet predictable behaviour, which is not directly tied to the concept of control.
Furthermore, unlike objectives grounded in traditional (Shannon) information theory, interactivity relies on asymmetric algorithmic mutual information between previous inputs and future outputs.

Active inference describes agentic behavior in partially observable environments as the minimization of free energy~\citep{friston2010action, sajid2021active}.
Free-energy  minimization prefers selecting actions that lead to highly predictable states—inputs that are unsurprising to the agent's model.
In contrast to free-energy minimization, maximizing interactivity actively discourages low-complexity predictable states.

\subsection{Relationship to other notions of boundaries and Markov processes} 

The boundaried Markov process that we define is also similar to other frameworks which introduce explicit boundaries.
Active inference, for example, considers a Markov blanket that separates probabilistic nodes in a directed acyclic graph as a way of separating the agent from its environment \citep{kirchhoff2018markov}.
Open Markov processes have also been defined which have explicit boundary states from which probability can flow in and out of \citep{baez16_markov}.
Our work applies similar ideas specifically in the computational setting that we consider.


\newpage

\newpage
\section*{NeurIPS Paper Checklist}

\begin{enumerate}

\item {\bf Claims}
    \item[] Question: Do the main claims made in the abstract and introduction accurately reflect the paper's contributions and scope?
    \item[] Answer: \answerYes{}
    \item[] Justification:
    \item[] Guidelines:
    \begin{itemize}
        \item The answer NA means that the abstract and introduction do not include the claims made in the paper.
        \item The abstract and/or introduction should clearly state the claims made, including the contributions made in the paper and important assumptions and limitations. A No or NA answer to this question will not be perceived well by the reviewers.
        \item The claims made should match theoretical and experimental results, and reflect how much the results can be expected to generalize to other settings.
        \item It is fine to include aspirational goals as motivation as long as it is clear that these goals are not attained by the paper.
    \end{itemize}

\item {\bf Limitations}
    \item[] Question: Does the paper discuss the limitations of the work performed by the authors?
    \item[] Answer: \answerYes{} 
    \item[] Justification: In the discussion, the limitations of an efficient algorithm for maximizing ineractivity is discussed.
    \item[] Guidelines:
    \begin{itemize}
        \item The answer NA means that the paper has no limitation while the answer No means that the paper has limitations, but those are not discussed in the paper.
        \item The authors are encouraged to create a separate "Limitations" section in their paper.
        \item The paper should point out any strong assumptions and how robust the results are to violations of these assumptions (e.g., independence assumptions, noiseless settings, model well-specification, asymptotic approximations only holding locally). The authors should reflect on how these assumptions might be violated in practice and what the implications would be.
        \item The authors should reflect on the scope of the claims made, e.g., if the approach was only tested on a few datasets or with a few runs. In general, empirical results often depend on implicit assumptions, which should be articulated.
        \item The authors should reflect on the factors that influence the performance of the approach. For example, a facial recognition algorithm may perform poorly when image resolution is low or images are taken in low lighting. Or a speech-to-text system might not be used reliably to provide closed captions for online lectures because it fails to handle technical jargon.
        \item The authors should discuss the computational efficiency of the proposed algorithms and how they scale with dataset size.
        \item If applicable, the authors should discuss possible limitations of their approach to address problems of privacy and fairness.
        \item While the authors might fear that complete honesty about limitations might be used by reviewers as grounds for rejection, a worse outcome might be that reviewers discover limitations that aren't acknowledged in the paper. The authors should use their best judgment and recognize that individual actions in favor of transparency play an important role in developing norms that preserve the integrity of the community. Reviewers will be specifically instructed to not penalize honesty concerning limitations.
    \end{itemize}

\item {\bf Theory assumptions and proofs}
    \item[] Question: For each theoretical result, does the paper provide the full set of assumptions and a complete (and correct) proof?
    \item[] Answer: \answerYes{}
    \item[] Justification:  Yes, in the appendix.
    \item[] Guidelines:
    \begin{itemize}
        \item The answer NA means that the paper does not include theoretical results.
        \item All the theorems, formulas, and proofs in the paper should be numbered and cross-referenced.
        \item All assumptions should be clearly stated or referenced in the statement of any theorems.
        \item The proofs can either appear in the main paper or the supplemental material, but if they appear in the supplemental material, the authors are encouraged to provide a short proof sketch to provide intuition.
        \item Inversely, any informal proof provided in the core of the paper should be complemented by formal proofs provided in appendix or supplemental material.
        \item Theorems and Lemmas that the proof relies upon should be properly referenced.
    \end{itemize}

    \item {\bf Experimental result reproducibility}
    \item[] Question: Does the paper fully disclose all the information needed to reproduce the main experimental results of the paper to the extent that it affects the main claims and/or conclusions of the paper (regardless of whether the code and data are provided or not)?
    \item[] Answer: \answerYes{}
    \item[] Justification: Yes, in the appendix.
    \item[] Guidelines:
    \begin{itemize}
        \item The answer NA means that the paper does not include experiments.
        \item If the paper includes experiments, a No answer to this question will not be perceived well by the reviewers: Making the paper reproducible is important, regardless of whether the code and data are provided or not.
        \item If the contribution is a dataset and/or model, the authors should describe the steps taken to make their results reproducible or verifiable.
        \item Depending on the contribution, reproducibility can be accomplished in various ways. For example, if the contribution is a novel architecture, describing the architecture fully might suffice, or if the contribution is a specific model and empirical evaluation, it may be necessary to either make it possible for others to replicate the model with the same dataset, or provide access to the model. In general. releasing code and data is often one good way to accomplish this, but reproducibility can also be provided via detailed instructions for how to replicate the results, access to a hosted model (e.g., in the case of a large language model), releasing of a model checkpoint, or other means that are appropriate to the research performed.
        \item While NeurIPS does not require releasing code, the conference does require all submissions to provide some reasonable avenue for reproducibility, which may depend on the nature of the contribution. For example
        \begin{enumerate}
            \item If the contribution is primarily a new algorithm, the paper should make it clear how to reproduce that algorithm.
            \item If the contribution is primarily a new model architecture, the paper should describe the architecture clearly and fully.
            \item If the contribution is a new model (e.g., a large language model), then there should either be a way to access this model for reproducing the results or a way to reproduce the model (e.g., with an open-source dataset or instructions for how to construct the dataset).
            \item We recognize that reproducibility may be tricky in some cases, in which case authors are welcome to describe the particular way they provide for reproducibility. In the case of closed-source models, it may be that access to the model is limited in some way (e.g., to registered users), but it should be possible for other researchers to have some path to reproducing or verifying the results.
        \end{enumerate}
    \end{itemize}

\item {\bf Open access to data and code}
    \item[] Question: Does the paper provide open access to the data and code, with sufficient instructions to faithfully reproduce the main experimental results, as described in supplemental material?
    \item[] Answer: \answerNA{}
    \item[] Justification: Not currently in the supplementary material, but we plan to open source it.
    \item[] Guidelines:
    \begin{itemize}
        \item The answer NA means that paper does not include experiments requiring code.
        \item Please see the NeurIPS code and data submission guidelines (\url{https://nips.cc/public/guides/CodeSubmissionPolicy}) for more details.
        \item While we encourage the release of code and data, we understand that this might not be possible, so “No” is an acceptable answer. Papers cannot be rejected simply for not including code, unless this is central to the contribution (e.g., for a new open-source benchmark).
        \item The instructions should contain the exact command and environment needed to run to reproduce the results. See the NeurIPS code and data submission guidelines (\url{https://nips.cc/public/guides/CodeSubmissionPolicy}) for more details.
        \item The authors should provide instructions on data access and preparation, including how to access the raw data, preprocessed data, intermediate data, and generated data, etc.
        \item The authors should provide scripts to reproduce all experimental results for the new proposed method and baselines. If only a subset of experiments are reproducible, they should state which ones are omitted from the script and why.
        \item At submission time, to preserve anonymity, the authors should release anonymized versions (if applicable).
        \item Providing as much information as possible in supplemental material (appended to the paper) is recommended, but including URLs to data and code is permitted.
    \end{itemize}

\item {\bf Experimental setting/details}
    \item[] Question: Does the paper specify all the training and test details (e.g., data splits, hyperparameters, how they were chosen, type of optimizer, etc.) necessary to understand the results?
    \item[] Answer: \answerYes{}
    \item[] Justification: In the appendix and supplementary material.
    \item[] Guidelines:
    \begin{itemize}
        \item The answer NA means that the paper does not include experiments.
        \item The experimental setting should be presented in the core of the paper to a level of detail that is necessary to appreciate the results and make sense of them.
        \item The full details can be provided either with the code, in appendix, or as supplemental material.
    \end{itemize}

\item {\bf Experiment statistical significance}
    \item[] Question: Does the paper report error bars suitably and correctly defined or other appropriate information about the statistical significance of the experiments?
    \item[] Answer: \answerYes{}
    \item[] Justification: Shaded error bars are provided, but no statistical p-test.
    \item[] Guidelines:
    \begin{itemize}
        \item The answer NA means that the paper does not include experiments.
        \item The authors should answer "Yes" if the results are accompanied by error bars, confidence intervals, or statistical significance tests, at least for the experiments that support the main claims of the paper.
        \item The factors of variability that the error bars are capturing should be clearly stated (for example, train/test split, initialization, random drawing of some parameter, or overall run with given experimental conditions).
        \item The method for calculating the error bars should be explained (closed form formula, call to a library function, bootstrap, etc.)
        \item The assumptions made should be given (e.g., Normally distributed errors).
        \item It should be clear whether the error bar is the standard deviation or the standard error of the mean.
        \item It is OK to report 1-sigma error bars, but one should state it. The authors should preferably report a 2-sigma error bar than state that they have a 96\% CI, if the hypothesis of Normality of errors is not verified.
        \item For asymmetric distributions, the authors should be careful not to show in tables or figures symmetric error bars that would yield results that are out of range (e.g. negative error rates).
        \item If error bars are reported in tables or plots, The authors should explain in the text how they were calculated and reference the corresponding figures or tables in the text.
    \end{itemize}

\item {\bf Experiments compute resources}
    \item[] Question: For each experiment, does the paper provide sufficient information on the computer resources (type of compute workers, memory, time of execution) needed to reproduce the experiments?
    \item[] Answer: \answerNo{}
    \item[] Justification: No, only a local cluster was used which did not log computational resources, but they can be replicated on a modern GPU in less than 24 hours.
    \item[] Guidelines:
    \begin{itemize}
        \item The answer NA means that the paper does not include experiments.
        \item The paper should indicate the type of compute workers CPU or GPU, internal cluster, or cloud provider, including relevant memory and storage.
        \item The paper should provide the amount of compute required for each of the individual experimental runs as well as estimate the total compute.
        \item The paper should disclose whether the full research project required more compute than the experiments reported in the paper (e.g., preliminary or failed experiments that didn't make it into the paper).
    \end{itemize}

\item {\bf Code of ethics}
    \item[] Question: Does the research conducted in the paper conform, in every respect, with the NeurIPS Code of Ethics \url{https://neurips.cc/public/EthicsGuidelines}?
    \item[] Answer: \answerYes{}
    \item[] Justification:
    \item[] Guidelines:
    \begin{itemize}
        \item The answer NA means that the authors have not reviewed the NeurIPS Code of Ethics.
        \item If the authors answer No, they should explain the special circumstances that require a deviation from the Code of Ethics.
        \item The authors should make sure to preserve anonymity (e.g., if there is a special consideration due to laws or regulations in their jurisdiction).
    \end{itemize}

\item {\bf Broader impacts}
    \item[] Question: Does the paper discuss both potential positive societal impacts and negative societal impacts of the work performed?
    \item[] Answer: \answerNo{}
    \item[] Justification: It does not, as it is purely a theoretical investigation.
    \item[] Guidelines:
    \begin{itemize}
        \item The answer NA means that there is no societal impact of the work performed.
        \item If the authors answer NA or No, they should explain why their work has no societal impact or why the paper does not address societal impact.
        \item Examples of negative societal impacts include potential malicious or unintended uses (e.g., disinformation, generating fake profiles, surveillance), fairness considerations (e.g., deployment of technologies that could make decisions that unfairly impact specific groups), privacy considerations, and security considerations.
        \item The conference expects that many papers will be foundational research and not tied to particular applications, let alone deployments. However, if there is a direct path to any negative applications, the authors should point it out. For example, it is legitimate to point out that an improvement in the quality of generative models could be used to generate deepfakes for disinformation. On the other hand, it is not needed to point out that a generic algorithm for optimizing neural networks could enable people to train models that generate Deepfakes faster.
        \item The authors should consider possible harms that could arise when the technology is being used as intended and functioning correctly, harms that could arise when the technology is being used as intended but gives incorrect results, and harms following from (intentional or unintentional) misuse of the technology.
        \item If there are negative societal impacts, the authors could also discuss possible mitigation strategies (e.g., gated release of models, providing defenses in addition to attacks, mechanisms for monitoring misuse, mechanisms to monitor how a system learns from feedback over time, improving the efficiency and accessibility of ML).
    \end{itemize}

\item {\bf Safeguards}
    \item[] Question: Does the paper describe safeguards that have been put in place for responsible release of data or models that have a high risk for misuse (e.g., pretrained language models, image generators, or scraped datasets)?
    \item[] Answer: \answerNA{}
    \item[] Justification: None of these were used.
    \item[] Guidelines:
    \begin{itemize}
        \item The answer NA means that the paper poses no such risks.
        \item Released models that have a high risk for misuse or dual-use should be released with necessary safeguards to allow for controlled use of the model, for example by requiring that users adhere to usage guidelines or restrictions to access the model or implementing safety filters.
        \item Datasets that have been scraped from the Internet could pose safety risks. The authors should describe how they avoided releasing unsafe images.
        \item We recognize that providing effective safeguards is challenging, and many papers do not require this, but we encourage authors to take this into account and make a best faith effort.
    \end{itemize}

\item {\bf Licenses for existing assets}
    \item[] Question: Are the creators or original owners of assets (e.g., code, data, models), used in the paper, properly credited and are the license and terms of use explicitly mentioned and properly respected?
    \item[] Answer: \answerNA{}
    \item[] Justification: Not used.
    \item[] Guidelines:
    \begin{itemize}
        \item The answer NA means that the paper does not use existing assets.
        \item The authors should cite the original paper that produced the code package or dataset.
        \item The authors should state which version of the asset is used and, if possible, include a URL.
        \item The name of the license (e.g., CC-BY 4.0) should be included for each asset.
        \item For scraped data from a particular source (e.g., website), the copyright and terms of service of that source should be provided.
        \item If assets are released, the license, copyright information, and terms of use in the package should be provided. For popular datasets, \url{paperswithcode.com/datasets} has curated licenses for some datasets. Their licensing guide can help determine the license of a dataset.
        \item For existing datasets that are re-packaged, both the original license and the license of the derived asset (if it has changed) should be provided.
        \item If this information is not available online, the authors are encouraged to reach out to the asset's creators.
    \end{itemize}

\item {\bf New assets}
    \item[] Question: Are new assets introduced in the paper well documented and is the documentation provided alongside the assets?
    \item[] Answer: \answerNA{}
    \item[] Justification:
    \item[] Guidelines:
    \begin{itemize}
        \item The answer NA means that the paper does not release new assets.
        \item Researchers should communicate the details of the dataset/code/model as part of their submissions via structured templates. This includes details about training, license, limitations, etc.
        \item The paper should discuss whether and how consent was obtained from people whose asset is used.
        \item At submission time, remember to anonymize your assets (if applicable). You can either create an anonymized URL or include an anonymized zip file.
    \end{itemize}

\item {\bf Crowdsourcing and research with human subjects}
    \item[] Question: For crowdsourcing experiments and research with human subjects, does the paper include the full text of instructions given to participants and screenshots, if applicable, as well as details about compensation (if any)?
    \item[] Answer: \answerNA{}
    \item[] Justification:
    \item[] Guidelines:
    \begin{itemize}
        \item The answer NA means that the paper does not involve crowdsourcing nor research with human subjects.
        \item Including this information in the supplemental material is fine, but if the main contribution of the paper involves human subjects, then as much detail as possible should be included in the main paper.
        \item According to the NeurIPS Code of Ethics, workers involved in data collection, curation, or other labor should be paid at least the minimum wage in the country of the data collector.
    \end{itemize}

\item {\bf Institutional review board (IRB) approvals or equivalent for research with human subjects}
    \item[] Question: Does the paper describe potential risks incurred by study participants, whether such risks were disclosed to the subjects, and whether Institutional Review Board (IRB) approvals (or an equivalent approval/review based on the requirements of your country or institution) were obtained?
    \item[] Answer: \answerNA{}
    \item[] Justification:
    \item[] Guidelines:
    \begin{itemize}
        \item The answer NA means that the paper does not involve crowdsourcing nor research with human subjects.
        \item Depending on the country in which research is conducted, IRB approval (or equivalent) may be required for any human subjects research. If you obtained IRB approval, you should clearly state this in the paper.
        \item We recognize that the procedures for this may vary significantly between institutions and locations, and we expect authors to adhere to the NeurIPS Code of Ethics and the guidelines for their institution.
        \item For initial submissions, do not include any information that would break anonymity (if applicable), such as the institution conducting the review.
    \end{itemize}

\item {\bf Declaration of LLM usage}
    \item[] Question: Does the paper describe the usage of LLMs if it is an important, original, or non-standard component of the core methods in this research? Note that if the LLM is used only for writing, editing, or formatting purposes and does not impact the core methodology, scientific rigorousness, or originality of the research, declaration is not required.
    \item[] Answer: \answerNo{}
    \item[] Justification:
    \item[] Guidelines:
    \begin{itemize}
        \item The answer NA means that the core method development in this research does not involve LLMs as any important, original, or non-standard components.
        \item Please refer to our LLM policy (\url{https://neurips.cc/Conferences/2025/LLM}) for what should or should not be described.
    \end{itemize}

\end{enumerate}


\end{document}